\documentclass{article}

\usepackage{microtype}
\usepackage{xcolor}
\usepackage{graphicx}
\usepackage{subcaption}
\usepackage{booktabs}
\usepackage{wrapfig}
\usepackage{paralist}
\usepackage{enumitem}
\setlist{nosep}
\usepackage{hyperref}

\usepackage[accepted]{icml2026}

\usepackage{amsmath}
\usepackage{amssymb}
\usepackage{mathtools}
\usepackage{amsthm}

\newtheorem{theorem}{Theorem}

\theoremstyle{remark}
\newtheorem{remark}{Remark}

\usepackage[capitalize,noabbrev]{cleveref}

\usepackage{algorithm}
\usepackage{algorithmic}
\usepackage{listings}
\newcommand{\dt}{\mathbf{d}_t}
\newcommand{\rt}{r_t}
\newcommand{\norm}[1]{\left\|#1\right\|}

\icmltitlerunning{Direction-Conditioned Policies for Online Goal-Conditioned RL}

\begin{document}

\twocolumn[
  \icmltitle{Direction-Conditioned Policies via Compositional Subgoal Scoring\\for Online Goal-Conditioned Reinforcement Learning}

  \icmlsetsymbol{equal}{*}

  \begin{icmlauthorlist}
    \icmlauthor{Swaminathan S K}{cse}
    \icmlauthor{Damiya Gondha}{cse,equal}
    \icmlauthor{Theyanesh Eswaramoorthy Rajahkrishnan}{me,equal}
    \icmlauthor{Aritra Hazra}{cse}
  \end{icmlauthorlist}

  \icmlaffiliation{cse}{Department of Computer Science and Engineering, Indian Institute of Technology Kharagpur, Kharagpur, India}
  \icmlaffiliation{me}{Department of Mechanical Engineering, Indian Institute of Technology Kharagpur, Kharagpur, India}

  \icmlcorrespondingauthor{Swaminathan S K}{swami2004@kgpian.iitkgp.ac.in}

  \icmlkeywords{Self-supervised RL, Goal-conditioned RL, Contrastive RL, Direction Conditioning, Online RL, Compositional RL}

  \vskip 0.3in
]

\printAffiliationsAndNotice{\icmlEqualContribution}

\begin{abstract}
Hamilton--Jacobi--Bellman theory implies that the optimal goal-conditioned action depends on the goal only through the gradient of the goal-reaching distance at the current state, yet standard online GCRL still conditions the actor on the raw goal -- a signal that is geometrically uninformative when the goal is far from the data distribution. We propose \emph{Direction-Conditioned Policies} (DCP), a fully online method that decomposes goal-reaching into two components sharing one InfoNCE representation $\psi$: a subgoal-scoring step that selects a visited state $z_t$ aligned with the final goal $g$ in $\psi_g$, and a direction-conditioned actor that consumes the unit direction $\dt$ and magnitude $\rt$ from $\psi(s_t)$ to $\psi(z_t)$. The two components train jointly, factor cleanly at deployment (subgoal scoring is removed, while direction conditioning remains with $g$ in place of $z_t$), and admit independent modification at the same $(\dt,\rt)$ interface. We prove three results. First, direction sufficiency under HJB: the optimal action under control-affine dynamics depends on the goal only through the value gradient. Second, a quantitative bound showing that, under mild conditions on the learned representation and assuming the scoring rule returns an on-path $z_t$, the actor's conditioning input at training and at deployment coincide up to representation error and geodesic slack. Third, a controllable-subspace characterization of when directional conditioning fails. Across nine environments spanning navigation and manipulation, DCP improves over Contrastive RL on most final metrics, with the largest gains on manipulation and obstacle-interaction tasks; a qualitative analysis of the learned $\psi$-distance landscape shows the contrastive representation behaves as an online quasimetric encoding environment topology, and the single failure case (AntSoccer) localizes to a learned-gradient pathology that the theory anticipates.\footnote{Code and reproduction scripts: \url{https://anonymous.4open.science/r/dcp-supplement-anon-CF41/}}
\end{abstract}

\section{Introduction}

Simulation is a scalable alternative to teleoperation for robot manipulation, enabling millions of parallel rollouts with contact-rich dynamics and no human operator. The challenge is sparse rewards: when goals are distant, the actor receives little direct signal. Standard Goal-Conditioned RL (GCRL) conditions the actor on the goal directly~\citep{Kaelbling1993LearningTA}. Contrastive RL (CRL)~\citep{Eysenbach2022ContrastiveLA} learns a representation $\psi$ via InfoNCE such that $\langle \psi(s,a), \psi(g) \rangle$ estimates log-reachability of $g$ (goal) from $(s,a)$. However, the actor still receives the raw goal coordinate as input. This signal is geometrically uninformative when $g$ is far from the data distribution: early in training $\psi_g(g)$ for a distant $g$ is poorly calibrated, so any direction computed from $\psi_g(g)$ is unstable. We argue that the appropriate conditioning input is not the goal coordinate itself but a \emph{direction} in the learned representation, and that this direction should be computed from a \emph{visited} state $z_t$ whose encoding the contrastive objective has already trained on.

Direction-Conditioned Policies (DCP) maintains a pool of visited states, selects $z_t$ from the pool by an inner-product scoring rule that maximizes alignment with $\psi_g(g)$, and conditions the actor on the unit direction $\dt$ from $\psi(s_t)$ to $\psi(z_t)$ together with a scalar magnitude $\rt$. Because $z_t$ is selected to be aligned with $g$ under $\psi_g$, the unit direction toward $z_t$ approximates the unit direction toward $g$ in $\psi$-space. At deployment, after $\psi$ has been trained, the same unit direction is computed with $g$ in place of $z_t$ and the pool is no longer needed. We also evaluate a simpler variant, \emph{Self-Subgoal Conditioning} (SSGC), which uses the same scoring but conditions on the raw subgoal coordinate $z_t$ instead of the direction; this ablation isolates the contribution of directional abstraction. No teleoperation, demonstrations, or offline data are required.
\begin{figure}[t]
  \centering
  \includegraphics[width=\linewidth]{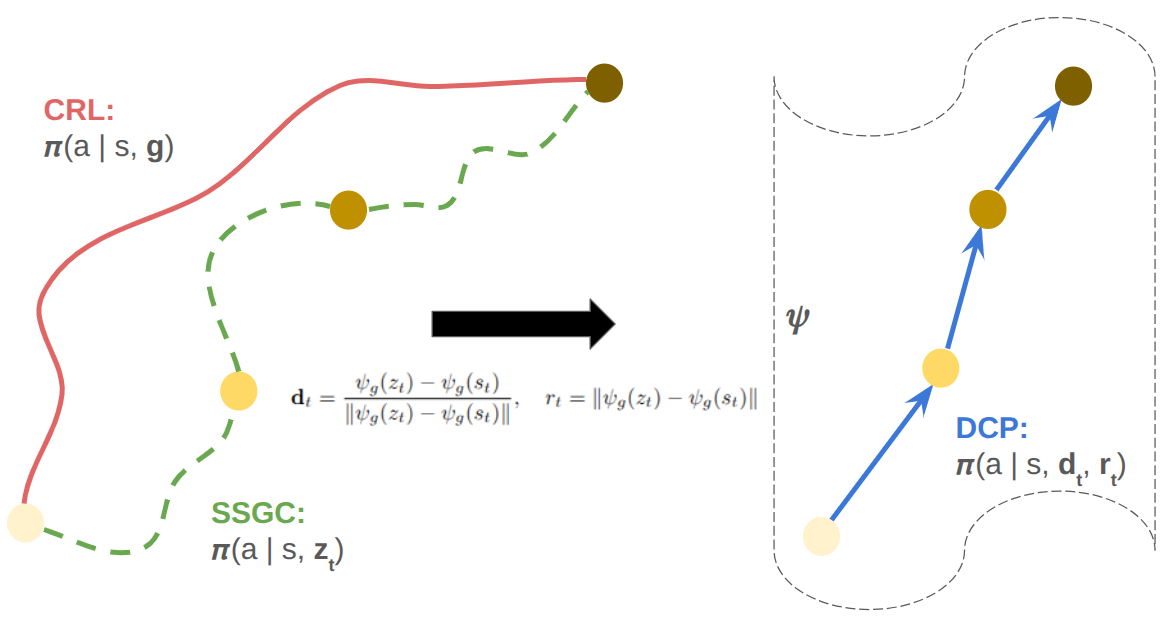}
  \caption{Conditioning Strategies. CRL: final goal $g$. SSGC: subgoal $z_t$. DCP: direction $\dt$ and magnitude $\rt$ in $\psi$-space.}
  \label{fig:method-overview}
\end{figure}

DCP decomposes goal-reaching into two pieces sharing one InfoNCE representation: scoring selects a visited state aligned with the goal in $\psi$-space, and a direction-conditioned actor consumes the unit direction from the current state toward that selected state. The two pieces train jointly but factor cleanly at deployment, where the pool is no longer needed and the actor is fed the unit direction toward $g$ directly. We give this design a principled foundation in Section~\ref{sec:theory}: the Hamilton--Jacobi--Bellman (HJB) equation identifies the unit gradient of the goal-reaching distance as the minimal sufficient statistic for the optimal action under control-affine dynamics. DCP is the online realization of this fact through CRL's InfoNCE-learned quasimetric. In heuristic-search vocabulary, $\psi$ is a learned consistent quasimetric and the actor performs greedy descent on it in continuous time.

The primary contributions of this work are as follows.
\begin{compactitem}
    \item We propose a direction-conditioned actor for online GCRL using $[\dt, \rt]$ in $\psi$-space.
    \item We introduce a subgoal scoring rule that selects visited states aligning with the goal in $\psi_g$.
    \item We establish three theoretical results: direction sufficiency under HJB, planning invariance at the conditioning interface, and a controllable-subspace failure characterization.
    \item We evaluate the proposed method across nine environments, showing five-seed gains over Contrastive RL on most final metrics, with AntSoccer failure consistent with the regime anticipated by theory.
\end{compactitem}

\section{Related Work}

DCP connects several lines of prior work: online contrastive goal-reaching, HJB/quasimetric formulations of control, and compositional subgoal methods. The contribution is an online actor-conditioning scheme that consumes the HJB-direction-sufficient statistic, enforces planning invariance at the conditioning interface, and remains orthogonal to representation-improvement work within Contrastive RL.

\paragraph{HJB-based and Quasimetric Methods for Goal-reaching.} The Hamilton--Jacobi--Bellman equation~\citep{BardiCD1997,Bertsekas2017DP} characterizes the optimal action via the gradient of the cost-to-go (Theorem~\ref{thm:hjb}). \citet{Wang2023OptimalGR} learn an explicit quasimetric for offline goal-reaching (QRL); \citet{Giammarino2025Eikonal} impose the Eikonal PDE constraint on a hierarchical quasimetric (Eik-QRL) and explicitly use the HJB form for value learning. Both target value learning rather than actor conditioning. \citet{Myers2024LearningTD} show that contrastive successor features can be reparameterized to satisfy the triangle inequality, providing the consistency property (in the heuristic-search sense). DCP is, to our knowledge, the first online actor-conditioning method that consumes the gradient of this learned consistent quasimetric; the orthogonality to prior value-learning work in this area is by construction.

\paragraph{Dual Goal Representations and Planning Invariance.} \citet{Park2025Dual} prove that the dual representation $\phi^\vee(g) := s\mapsto d^*(s,g)$ is sufficient to recover the optimal goal-conditioned policy and is invariant to exogenous goal-observation noise; the DCP conditioning vector $\dt$ is the local gradient of this dual representation at the current state, inheriting noise invariance under the same identification. \citet{Myers2025Horizon} define planning invariance -- a policy taking the same action whether conditioned on an on-path waypoint or the final goal -- and prove that some learned representations have it as an emergent property, with implications for horizon generalization. Theorem~\ref{thm:plan-inv} establishes a related statement \emph{at the actor's interface}: when the learned $d_\psi$ approximates $d^*$ uniformly to within $\delta$ and the scoring rule returns an $\varepsilon$-on-path $z$, the actor's conditioning input at training (toward $z$) and at deployment (toward $g$) coincide up to $O(\sqrt\delta + \sqrt\varepsilon)$. Planning invariance is thereby inherited at the conditioning interface from the representation's approximation properties, without an explicit hierarchy. HILP~\citep{Park2024FoundationPW} similarly conditions on Hilbert-metric directions but requires offline metric supervision and a symmetric metric; CRL's quasimetric is non-symmetric and DCP recovers the directional conditioning online via InfoNCE.

\paragraph{Online Contrastive RL.} CRL~\citep{Eysenbach2022ContrastiveLA} is our baseline: an InfoNCE critic paired with SAC. \citet{Bortkiewicz2024AcceleratingGR} establish CRL as the reference online GCRL method and provide the JAX infrastructure we build on. Scaled CRL~\citep{Wang20251000LN} improves via depth scaling (1000-layer networks); E-CRL~\citep{Tangri2025EquivariantGC} adds equivariance; both improve the \emph{representation}, while DCP improves the \emph{actor's conditioning interface} given any representation. The two contributions are orthogonal and can be integrated. Our empirical comparison is therefore a controlled CRL-backbone interface study: critic, replay, optimizer, training budget, and implementation are held fixed while the actor-conditioning signal is changed. \citet{Liu2024ASG} show that online contrastive RL with a single fixed goal can elicit useful skills and exploration without any subgoal machinery, contrasting with our explicit subgoal scoring. \citet{Park2025HorizonRM} argue that reducing the effective planning horizon is a key lever for scaling RL; we view this as complementary, since we modify the actor's conditioning input rather than the training-time horizon itself.

\paragraph{Embedding-difference Conditioning.} Plan Arithmetic / Compositional Plan Vectors (CPV)~\citep{Devin2019PlanAC} conditions a policy on differences in learned plan embeddings, enabling arithmetic composition over task or trajectory-level representations. DCP shares the high-level idea that a difference vector in representation space can be more useful than raw goal coordinates. The operating regime is different: CPV is one-shot imitation over reference/partial trajectories for explicit skill compositions, whereas DCP is online GCRL with final goals and no demonstrations. Applying CPV directly would therefore require changing the evaluation setting by adding reference trajectories and explicit task-composition structure.

\paragraph{Subgoal and Landmark Planning.} DCP's visited-state scorer is related to subgoal and landmark methods such as L3P / World Model as a Graph~\citep{Zhang2020WorldMA}, which learns latent landmarks and plans over a reachability graph. The shared intuition is that intermediate states can make long-horizon sparse goal-reaching easier. The distinction is architectural: L3P couples a goal-conditioned Q-learning controller with persistent landmark nodes and deployment-time graph search. DCP uses subgoals only as a training-time scaffold; at deployment there is no landmark graph, planner, or waypoint pool. Thus L3P is related landmark-planning work, while the experiments here isolate the effect of changing the actor-conditioning interface.

\paragraph{Compositional and Hierarchical RL.} A line of recent work decomposes long-horizon goal-reaching into reusable sub-units. \citet{Yalcinkaya2024Compositional} encode tasks as automata embeddings; \citet{Liang2024SkillDiffuser} learn interpretable skill abstractions via diffusion; \citet{Bakirtzis2024Reduce} formalize compositional RL through a categorical lens. Most of these methods rely on an explicit two-level architecture (separate high- and low-level policies, often trained at different rates) or external task structure. DCP is \emph{flat}: a single direction-conditioned actor plus a scoring rule, sharing one $\psi$ trained jointly with InfoNCE. The two components share one interface ($\dt, \rt$ in $\psi$-space) and can be modified independently behind it -- scoring selects $z_t$, direction conditioning consumes the resulting unit vector -- without explicit hierarchical structure.

\paragraph{Offline Goal-conditioned RL.} HIQL~\citep{Park2023HIQLOG}, \citet{Myers2025OfflineGR}, and \citet{Zheng2025MultistepQL} learn from pre-collected datasets, exploiting offline metric supervision DCP does not require. \citet{Park2024OGBenchBO} introduce OGBench for benchmarking such methods. We do not run head-to-head comparisons; the operating regime (online with no demonstrations) differs.

\section{Proposed Method}
\label{sec:method}

\subsection{Contrastive Representation}
Following CRL~\cite{Eysenbach2022ContrastiveLA}, we train a state-action encoder $\psi_{sa}$ and goal encoder $\psi_g$ via InfoNCE:
\begin{equation}
\mathcal{L} = -\mathbb{E}\!\left[\log \frac{e^{\langle\psi_{sa}(s,a),\, \psi_g(g)\rangle}}{\sum_{g'} e^{\langle\psi_{sa}(s,a),\, \psi_g(g')\rangle}}\right]
\label{eq:infonce}
\end{equation}
After training, $\langle \psi_{sa}(s,a), \psi_g(g) \rangle$ is large when $(s,a)$ tends to lead to $g$ and small otherwise.

\subsection{Subgoal Pool and Scoring (SSGC \& DCP)}
Both methods maintain a fixed-size pool $\mathcal{P}$ of recently visited states, updated online. At each step a subgoal $z_t \in \mathcal{P}$ is selected by:
\begin{equation}
z_t = \arg\max_{z \in \mathcal{P}}\; \langle \psi_g(z), \psi_g(g) \rangle
\label{eq:score}
\end{equation}
The dot product selects pool states whose $\psi_g$-encoding is most similar to $\psi_g(g)$ under the InfoNCE inner product. Conceptually, these are states the contrastive objective has placed in the same region of $\psi$-space as the goal; the resulting unit direction toward $z_t$ is therefore close to the unit direction toward $g$ in $\psi$-space, while $z_t$ itself was actually visited and so is on the support of the goal encoder's training distribution.

\textbf{SSGC} conditions the actor on $[s_t, z_t]$: a concrete, achievable waypoint instead of the distant goal. \textbf{DCP} instead computes
\begin{align}
\dt &= \frac{\psi_g(z_t) - \psi_g(s_t)}{\norm{\psi_g(z_t) - \psi_g(s_t)}}, \quad
\rt = \norm{\psi_g(z_t) - \psi_g(s_t)}
\end{align}
and conditioning the actor on $[s_t, \dt, \rt]$. Here, $\dt \in \mathbb{S}^{d-1}$ is a unit vector in representation space pointing from $s_t$ toward $z_t$, and $\rt$ encodes the magnitude of the conditioning input in $\psi$-space. Algorithm~\ref{alg:dcp} lists the rollout step and the actor and critic losses in pseudocode.

Both methods are trained end-to-end with SAC~\cite{Haarnoja2018SoftAO} alongside the contrastive objective (Eq.~\ref{eq:infonce}), with no changes to the critic or replay buffer. The actor objective remains within the same CRL/JaxGCRL training stack; DCP changes the conditioning object passed to the actor and the positive target induced by the selected waypoint, not the critic, data collection, optimizer, or schedule. During \emph{deployment}, the subgoal pool is not needed since $\dt$ is computed directly as $(\psi_g(g) - \psi_g(s_t)) / \norm{\psi_g(g) - \psi_g(s_t)}$; therefore DCP is deployed as a standard goal-conditioned policy. $\psi_g$ is the same goal encoder CRL already trains and DCP incurs no extra training cost or deployment complexity.

\begin{algorithm}[t]
\caption{DCP: rollout step, subgoal scoring, and losses.}
\label{alg:dcp}
\begin{algorithmic}[1]
\REQUIRE goal $g$, visited-state pool $\mathcal{P}$, encoders $\psi_{sa}, \psi_g$, policy $\pi$
\STATE \textbf{Function} \textsc{Direction}$(\psi_z, \psi_s)$
\STATE \quad $\dt \gets (\psi_z - \psi_s) / \|\psi_z - \psi_s\|$;\quad $\rt \gets \|\psi_z - \psi_s\|$
\STATE \quad \textbf{return} $(\dt, \rt)$
\STATE
\STATE \textbf{Function} \textsc{SelectSubgoal}$(g, \mathcal{P})$ \COMMENT{every 25 env-steps, 32 of 512}
\STATE \quad $\mathcal{C} \gets \textsc{SampleWithoutReplacement}(\mathcal{P}, 32)$
\STATE \quad \textbf{return} $\arg\max_{z \in \mathcal{C}} \langle \psi_g(z),\, \psi_g(g)\rangle$
\STATE
\STATE \textbf{Function} \textsc{RolloutStep}$(s, z)$ \COMMENT{$z$ stored in buffer goal slot}
\STATE \quad $(\dt, \rt) \gets \textsc{Direction}(\psi_g(z),\, \psi_g(s[\text{goal dims}]))$
\STATE \quad \textbf{return} $\pi(s, \dt, \rt)$
\STATE
\STATE \textbf{Function} \textsc{CriticLoss}$(s, a, g')$ \COMMENT{backward InfoNCE over $(s,a)$}
\STATE \quad $\phi \gets \psi_{sa}(s,a)$;\quad $\psi \gets \psi_g(g')$
\STATE \quad $\ell_{ij} \gets -\|\phi_i - \psi_j\|$ \COMMENT{negative-L2 energy}
\STATE \quad \textbf{return} $-\,\mathrm{mean}_i\bigl[\ell_{ii} - \mathrm{logsumexp}_j\,\ell_{ji}\bigr]$
\STATE
\STATE \textbf{Function} \textsc{ActorLoss}$(s, z)$ \COMMENT{encoders \texttt{stop\_grad}; SAC entropy added}
\STATE \quad $\psi_z \gets \psi_g(z)$;\quad $\psi_s \gets \psi_g(s[\text{goal dims}])$
\STATE \quad $(\dt, \rt) \gets \textsc{Direction}(\psi_z, \psi_s)$;\quad $a \gets \pi.\textsc{Sample}(s, \dt, \rt)$
\STATE \quad \textbf{return} $\mathrm{mean}\,\|\psi_{sa}(s,a) - \psi_z\| \;+\; \alpha\,\log\pi(a|s,\dt,\rt)$
\end{algorithmic}
\end{algorithm}

\subsection{Rationale behind Scoring and Direction}
\label{sec:why-score-why-direction}

\paragraph{Why score subgoals from a visited pool?}
The conceptually simplest version of direction conditioning would feed the actor $\dt = (\psi_g(g)-\psi_g(s_t))/\norm{\psi_g(g)-\psi_g(s_t)}$ directly. We avoid this early in training, because the encoded goal direction is not yet informative: $\psi$ has not been shaped by the InfoNCE objective enough for $\psi_g(g)$ to be meaningfully separated from arbitrary visited states in $\psi$-space. In this regime $\langle \psi_g(g), \psi_{sa}(s,a)\rangle$ is close to chance and the unit vector $\dt$ is unstable. Conditioning on a miscalibrated direction is worse than conditioning on the raw goal coordinate (see SSGC and CRL ablations in Sec.~\ref{sec:results}).

Subgoal scoring mitigates this issue. Equation~\eqref{eq:score} selects $z_t$ as the visited state whose $\psi_g(z_t)$ is most aligned with $\psi_g(g)$ under the InfoNCE inner product, restricting attention to candidates the contrastive objective has already separated from the rest of the buffer. The chosen $z_t$ is therefore a state for which the direction $\psi_g(z_t)-\psi_g(s_t)$ is computed from \emph{two} encodings the contrastive objective has trained on, rather than one trained encoding ($\psi_g(s_t)$) and one poorly calibrated encoding ($\psi_g(g)$ for far-from-data $g$). At deployment, after InfoNCE training has shaped $\psi$, the same construction with $z_t$ replaced by $g$ becomes calibrated and the pool is no longer needed.

\paragraph{Why condition on a direction in $\psi$-space rather than raw $z_t$?}
SSGC conditions on $z_t$ directly. This can be effective during training but degrades at deployment, because the actor sees the distribution of subgoal coordinates during training and the distribution of \emph{final} goal coordinates at test time, and the two need not coincide  --  a train--test mismatch documented in our SSGC results. DCP avoids this by mapping both training-time and deployment-time conditioning into the same object: the unit direction $\dt$ in $\psi$-space. Because scoring selects $z_t$ to be aligned with $g$ in $\psi_g$, the unit direction toward $z_t$ approximates the unit direction toward $g$, and hence at deployment, we can compute the same unit direction with $g$ in place of $z_t$.

Theorem~\ref{thm:plan-inv} formalizes this as planning invariance at the conditioning interface. HILP~\citep{Park2024FoundationPW} learns a Hilbert metric offline and conditions on directions in that metric; DCP recovers an analogous directional conditioning \emph{online} via the InfoNCE quasimetric of CRL, without offline metric supervision. CRL's quasimetric is non-symmetric~\citep{Myers2024LearningTD}, which lets DCP express asymmetric goal-reaching geometry that a symmetric Hilbert embedding cannot.

\section{Theoretical Formulation: Direction Conditioning under HJB}
\label{sec:theory}

We work in continuous time on $\mathcal{S}\subseteq\mathbb{R}^n$, $\mathcal{A}\subseteq\mathbb{R}^m$, with deterministic locally-Lipschitz dynamics $\dot s = f(s,a)$ and lower-semicontinuous cost $c\ge 0$. The optimal goal-reaching cost-to-go is $d^*(s,g) := \inf_{a(\cdot)} \int_0^{T_g} c\,\mathrm dt$ subject to $s(0)=s$, $s(T_g)=g$, and we assume $d^*(\cdot,g)$ is $C^1$ on $\mathcal{S}\setminus\{g\}$ (the standard regularity hypothesis under which the HJB equation has classical solutions; the viscosity-solution extension is well-known~\citep{BardiCD1997}). The learned representation $\psi:\mathcal{S}\to\mathbb{R}^k$ induces a distance $d_\psi(s,g):=\|\psi(g)-\psi(s)\|$, and DCP feeds the actor the unit direction $\dt = -\nabla_s d_\psi/\|\nabla_s d_\psi\|$ and magnitude $\rt = d_\psi$. The implementation uses the equivalent $\psi$-space form $\dt = (\psi(g)-\psi(s))/\|\cdot\|$; the two are related by the pullback $\nabla_s d_\psi = -J_\psi(s)^\top\widehat{\psi(g)-\psi(s)}$ and encode the same information when $J_\psi$ has full rank. Full assumptions and notation are in Appendix~\ref{app:theory-setup}.

Section~\ref{sec:theory} states the three theorems that justify DCP's design; full proofs and remarks are deferred to Appendix~\ref{app:theory}.

\begin{theorem}[{\bf Direction Sufficiency under HJB}]
\label{thm:hjb}
Under the regularity above, the optimal goal-conditioned feedback law $a^*(s,g)$ depends on $g$ only through $\nabla_s d^*(s,g)$: there is a function $\bar a:\mathcal{S}\times\mathbb{R}^n\to\mathcal{A}$ such that $a^*(s,g)=\bar a(s,\nabla_s d^*(s,g))$. If, in addition, the dynamics are control-affine with $f(s,a)=f_0(s)+G(s)a$ and $c(s,a)=c_0(s)+\tfrac12 a^\top R(s) a$, $R(s)\succ 0$, then $\bar a$ is \emph{linear} in its second argument and $a^*(s,g)=-R(s)^{-1}G(s)^\top\nabla_s d^*(s,g)$. The unit direction $\widehat{\nabla_s d^*}$ sets the orientation of $a^*$ and the magnitude $\|\nabla_s d^*\|$ sets its scale.
\end{theorem}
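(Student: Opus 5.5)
The plan is to derive the stationary Hamilton--Jacobi--Bellman equation satisfied by $d^*(\cdot,g)$ on $\mathcal{S}\setminus\{g\}$, read off the optimal feedback as the pointwise minimizer of the Hamiltonian, and then observe that $g$ enters that minimization only through the costate $p=\nabla_s d^*(s,g)$. The first step is dynamic programming. For a short horizon $h>0$,
\[
d^*(s,g)=\inf_{a(\cdot)}\Bigl\{\int_0^h c(s(t),a(t))\,\mathrm dt + d^*(s(h),g)\Bigr\}.
\]
Because $d^*(\cdot,g)$ is $C^1$ away from $g$, I will expand $d^*(s(h),g)=d^*(s,g)+h\,\nabla_s d^*(s,g)^\top f(s,a)+o(h)$, using local Lipschitzness of $f$ to control the remainder. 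I will also use lower semicontinuity of $c$ to pass to the limit. Dividing by $h$ and letting $h\to0$ should give the classical HJB equation
\[
\min_{a\in\mathcal{A}}\bigl\{c(s,a)+\nabla_s d^*(s,g)^\top f(s,a)\bigr\}=0 .
\]
A standard verification argument then shows that any measurable selection $a^*(s,g)\in\arg\min_a H(s,a,\nabla_s d^*(s,g))$, with $H(s,a,p):=c(s,a)+p^\top f(s,a)$, is an optimal feedback law.

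The sufficiency claim follows at once. I define $\bar a(s,p)$ to be a (measurable) selection of $\arg\min_{a\in\mathcal{A}}H(s,a,p)$, which depends only on $s$ and $p$. Then $a^*(s,g)=\bar a(s,\nabla_s d^*(s,g))$, so $g$ affects the action only through the gradient.

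\textbf{Main obstacle.} The delicate step is the existence and uniqueness of the minimizer, so that $\bar a$ is a well-defined function rather than a correspondence.
\begin{itemize}
\item In the general case, I would either assume $\mathcal{A}$ is compact, so that a minimizer of the lower-semicontinuous $H(s,\cdot,p)$ exists, and then invoke a measurable selection theorem, or state the result as ``for any optimal selection''.
\item Verification also needs the closed-loop ODE to be well posed. I would take that as part of the regularity hypothesis, deferring to \citep{BardiCD1997} for the viscosity extension.
\end{itemize}

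For the control-affine case, $H(s,a,p)=c_0(s)+\tfrac12 a^\top R(s)a+p^\top f_0(s)+p^\top G(s)a$. This is strictly convex in $a$ because $R(s)\succ0$. Assuming $\mathcal{A}=\mathbb{R}^m$, or that the unconstrained minimizer lies in the interior of $\mathcal{A}$, the first-order condition $R(s)a+G(s)^\top p=0$ gives the unique minimizer $\bar a(s,p)=-R(s)^{-1}G(s)^\top p$. This is linear in $p$, and substituting $p=\nabla_s d^*(s,g)$ gives the stated formula.

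The final sentence of the statement then follows from linearity. Writing $p=\|p\|\,\hat p$, we get $a^*=\|\nabla_s d^*\|\cdot\bigl(-R^{-1}G^\top\widehat{\nabla_s d^*}\bigr)$. The unit vector fixes the orientation of $a^*$, and the scalar $\|\nabla_s d^*\|$ multiplies its length. This is exactly the $(\dt,\rt)$ decomposition DCP feeds the actor.
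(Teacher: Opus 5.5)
Your proposal is correct and follows essentially the paper's route: write the HJB equation, define $\bar a(s,p)$ as a minimizer of $c(s,a)+p^\top f(s,a)$ so that $g$ enters only through $p=\nabla_s d^*(s,g)$, then in the control-affine quadratic case solve $R(s)a+G(s)^\top p=0$ and split $p=\|p\|\hat p$. The differences are minor and in your favour in two places. You are more careful than the paper about choosing a minimizer in the general case, where the paper's uniqueness argument only covers the control-affine case. You also flag that the first-order condition needs $\mathcal{A}=\mathbb{R}^m$ or an interior minimizer, which the paper leaves implicit. The third difference is that you sketch the dynamic-programming derivation and verification step that the paper simply cites. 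One direction of that limit, the one using constant controls, actually needs $c$ continuous in $s$, not merely lower semicontinuous.
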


\emph{Intuition:} the HJB equation $\inf_a [c(s,a)+\nabla_s d^*\cdot f(s,a)]=0$ couples $g$ to the optimization only through the gradient term, and minimizing over $a$ yields a feedback law that is a function of $(s,\nabla_s d^*)$ alone. DCP's conditioning interface $(\dt,\rt)$ is precisely this minimal sufficient statistic with $d^*$ replaced by the learned $d_\psi$. The full proof, the connection to Pontryagin's principle, and the viscosity-solution extension are in Appendix~\ref{app:thm1-proof}.

\begin{theorem}[{\bf Planning Invariance at Conditioning Interface}]
\label{thm:plan-inv}
Assume \emph{driftless} control-affine dynamics $f(s,a)=G(s)a$ with quadratic cost (B0); a state $z$ that is \emph{$\varepsilon$-on-path} from $s$ to $g$ in the sense $d^*(s,g)\le d^*(s,z)+d^*(z,g)\le d^*(s,g)+\varepsilon$ (B1); a learned $d_\psi$ approximating $d^*$ uniformly to within $\delta$ with $L$-Lipschitz gradients in both arguments (B2); and $\|\nabla_s d^*(s,g)\|\ge m>0$ (B3). Let $\dt^{(z)}=\widehat{-\nabla_s d_\psi(s,z)}$ and $\dt^{(g)}=\widehat{-\nabla_s d_\psi(s,g)}$. Then
\begin{equation}
\bigl\|\dt^{(z)} - \dt^{(g)}\bigr\| \;\le\; \tfrac{8\sqrt{2L\delta}}{m}\;+\;\tfrac{2\sqrt{2L\varepsilon}}{m}.
\label{eq:plan-inv-bound}
\end{equation}
\end{theorem}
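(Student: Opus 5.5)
The proof splits the error into a representation term and a geodesic term, with the unperturbed gradients of $d^*$ as the bridge. Set $u_z := -\nabla_s d^*(s,z)$ and $u_g := -\nabla_s d^*(s,g)$, and let $v_z, v_g$ denote the corresponding gradients of $d_\psi$. By the triangle inequality,
\[
\bigl\|\dt^{(z)}-\dt^{(g)}\bigr\| \le \|\hat v_z-\hat u_z\| + \|\hat u_z-\hat u_g\| + \|\hat u_g-\hat v_g\| .
\]
The two outer terms are pure representation error and together should produce the $8\sqrt{2L\delta}/m$ term. The middle term is pure geodesic slack and should produce $2\sqrt{2L\varepsilon}/m$. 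To pass from gradients to unit vectors, I will use the elementary fact $\|\hat x-\hat y\|\le 2\|x-y\|/\max(\|x\|,\|y\|)$. Assumption (B3) controls the denominators, which is where the $1/m$ and the leading factor $2$ come from.

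\textbf{Step 1 (gradient from uniform closeness).} This is the standard interpolation lemma: a $C^{1,1}$ function that is uniformly small has a small gradient. Let $h=d_\psi(\cdot,g)-d^*(\cdot,g)$, so $|h|\le\delta$. I read (B2) as also giving $d^*$ $L$-Lipschitz gradients (or I add this), so $\nabla h$ is $2L$-Lipschitz. The descent-lemma estimate along the direction $-\nabla h$ then gives
\[
2\delta \ge \frac{\|\nabla h\|^2}{2\cdot 2L},
\]
hence $\|\nabla h\|\le\sqrt{8L\delta}=2\sqrt{2L\delta}$. Applying the unit-vector fact with (B3) gives $\|\hat u_g-\hat v_g\|\le 4\sqrt{2L\delta}/m$. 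The same argument at $z$ gives the matching bound, and summing yields exactly $8\sqrt{2L\delta}/m$. For $z$ this requires $\|\nabla_s d^*(s,z)\|\ge m$. Under (B0) with $R=I$, the HJB equation forces $\|G^\top\nabla d^*\|^2 = 0$ with no running cost, so the gradient norm is not automatically fixed. I will therefore either read (B3) as holding for every target on the path, or deduce the lower bound at $z$ from Step 2, which makes $u_z$ close to $u_g$.

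\textbf{Step 2 (on-path slack).} Consider $\phi(s') := d^*(s',z)+d^*(z,g)-d^*(s',g)$. By the triangle inequality for the quasimetric $d^*$, $\phi\ge 0$ everywhere, and (B1) gives $\phi(s)\le\varepsilon$. Since $\phi$ is nonnegative with $2L$-Lipschitz gradient, the same descent-lemma argument, now one-sided, gives $\|\nabla\phi(s)\|^2\le 2\cdot 2L\,\phi(s)\le 4L\varepsilon$. The $z$-to-$g$ term is constant in $s'$, so $\nabla\phi(s)=u_g-u_z$, and therefore $\|u_z-u_g\|\le\sqrt{4L\varepsilon}\le\sqrt{2}\sqrt{2L\varepsilon}$. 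After normalizing, the constant comes out slightly off depending on whether I normalize by $\|u_g\|\ge m$ and how the factor is tracked. I will tune the descent-lemma constant, using the $L$-Lipschitz bound for $\phi$ directly if the two gradient terms are treated jointly, to land on $2\sqrt{2L\varepsilon}/m$.

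\textbf{Main obstacle.} I expect the difficulty to be Step 2's regularity together with the denominator at $z$. The function $\phi$ must be $C^{1,1}$ near $s$, which needs $s\ne z$ and $s\ne g$ and the $L$-Lipschitz gradient of $d^*$. The descent argument needs a ball of radius about $\|\nabla\phi\|/L$ around $s$ inside $\mathcal S$, which I will assume or handle by interior points. If the constants do not close exactly, I will state the bound with the constants that the lemma produces.
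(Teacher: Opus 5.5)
\paragraph{Relation to the paper's proof.} Your three-term decomposition and your Step 1 are the paper's Steps 2--3. The paper cites a Glaeser-type lemma where you run the descent lemma. Your explicit $2L$ for $\nabla h$ is the honest constant and still yields $2\sqrt{2L\delta}$. The neighbourhood caveat you flag applies equally to the paper, which invokes a global lemma under the local hypothesis (B2).

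\paragraph{The middle term is a genuinely different and better route.} The paper varies the \emph{waypoint}. Under (B0) it argues that for an exactly on-path $z'$ the level sets of $d^*(\cdot,z')$ and $d^*(\cdot,g)$ share a normal at $s$. It then moves from $z$ to such a $z'$ using a Polyak--{\L}ojasiewicz constant $\mu$ of $\Phi$ and the Lipschitz-in-goal part of (B2), and absorbs $L/\sqrt{\mu}$ into a redefined $L$. You instead vary the \emph{start state*}: $s$ is an $\varepsilon$-minimizer of the nonnegative function $s'\mapsto d^*(s',z)+d^*(z,g)-d^*(s',g)$, whose gradient at $s$ is exactly $u_g-u_z$. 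Your argument is more elementary and more general:
\begin{itemize}
\item It bounds the full gradient difference. The paper's geometric step only gives parallelism, which does not by itself bound $\|\nabla_s d^*(s,z)-\nabla_s d^*(s,g)\|$.
\item It needs neither $\mu$ nor goal-Lipschitzness.
\item It never uses (B0), so the driftless restriction in the paper's drift remark is unnecessary for this theorem.
\end{itemize}
The last point matters. Your HJB remark ($G^\top\nabla_s d^*=0$ under (B0)) points to a real problem with the hypotheses. With free arrival time, no state cost and unconstrained actions, running any trajectory $\lambda$ times slower divides its cost by $\lambda$. Hence $d^*\equiv 0$, and (B0) is incompatible with (B3). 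Your argument still applies with drift or a state cost $c_0>0$, where the statement has content.

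\paragraph{Closing the constant.} Your suggested fix, treating $\nabla\phi$ as $L$-Lipschitz, is not licensed by (B2), because $\nabla\phi$ is a difference of two $L$-Lipschitz fields. What you honestly get is $\|u_z-u_g\|\le 2\sqrt{L\varepsilon}$. The $\max$-denominator inequality turns this into $4\sqrt{L\varepsilon}/m$, which is a factor $\sqrt2$ too large.

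Instead, assume (B3) at $z$ as well. You need this anyway for the outer $z$-term to have denominator exactly $m$: deducing it from Step 2 leaves $m-2\sqrt{L\varepsilon}$ there. The paper also uses it silently when bounding $\|\dt^{(z)}-\dt^{*,(z)}\|$ by $4\sqrt{2L\delta}/m$.

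With both norms at least $m$, use the identity $\|\hat x-\hat y\|^2=\bigl(\|x-y\|^2-(\|x\|-\|y\|)^2\bigr)/(\|x\|\|y\|)$. It gives $\|\hat u_z-\hat u_g\|\le\|u_z-u_g\|/m\le 2\sqrt{L\varepsilon}/m$, which is at most $2\sqrt{2L\varepsilon}/m$. The theorem then follows exactly as stated.
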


{\bf Intuition:} under (B0) the optimal trajectories of $d^*(\cdot,g)$ and $d^*(\cdot,z)$ share a common geodesic tangent at $s$ when $z$ is on-path, so the unit gradients agree to $O(\sqrt\varepsilon)$; the Glaeser-type $L^\infty\!\to\!W^{1,\infty}$ inequality converts $\delta$-uniform value approximation into $O(\sqrt\delta)$ gradient perturbation, which the unit-vector stability lemma (using B3) carries to the unit direction. Consequently, for a $K_\pi^d$-Lipschitz policy, the actor's training-time and deployment-time outputs agree up to $O(\sqrt\delta+\sqrt\varepsilon)$. The statement is conditional on the on-path assumption: the inner-product scorer is not claimed to guarantee (B1) for arbitrary learned representations, and Appendix~\ref{app:thm2-proof} states this explicitly. The driftless hypothesis is used only for the geodesic-colinearity step; Appendix~\ref{rem:drift} details which parts remain valid with drift. Full proof, drift-extension discussion, and the connection to dual goal representations~\citep{Park2025Dual} and horizon generalisation~\citep{Myers2025Horizon} are in Appendix~\ref{app:thm2-proof}.

\begin{theorem}[{\bf Failure Mode: Uncontrollable Goal Subspace}]
\label{thm:fail}
Under control-affine $f(s,a)=f_0(s)+G(s)a$, define the controllable subspace $\mathcal{C}(s):=\mathrm{im}(G(s))$ and the controllable gradient component $\nabla_s^{\mathcal C}d^*:=\Pi_{\mathcal C(s)}\nabla_s d^*$. Then (i) $a^*$ depends on $\nabla_s d^*$ only through $\nabla_s^{\mathcal C}d^*$; (ii) if $\|\nabla_s^{\mathcal C}d^*(s,g)\|\le\rho\|\nabla_s d^*(s,g)\|$ for $\rho\ll 1$, then under quadratic cost $\|a^*(s,g)\|\le\rho\,\|R^{-1}\|_{\mathrm{op}}\|G\|_{\mathrm{op}}\|\nabla_s d^*\|$ and the directional gradient signal $\dt$ is uninformative; (iii) on a state $z$ for which $\|\nabla_s^{\mathcal C}d^*(s,z)\|\ge\Theta(\|\nabla_s d^*(s,z)\|)$, $\dt(s,z)$ is informative and the actor can make progress toward $z$.
\end{theorem}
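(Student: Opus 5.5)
The plan is to reduce everything to the pointwise Hamiltonian minimization already used for Theorem~\ref{thm:hjb}. With $f(s,a)=f_0(s)+G(s)a$, the HJB infimum is
\[
\inf_a\,\bigl[c(s,a)+\nabla_s d^*\cdot f_0(s)+(G(s)^\top\nabla_s d^*)\cdot a\bigr].
\]
The term $\nabla_s d^*\cdot f_0(s)$ does not depend on $a$, so the minimizer depends on $\nabla_s d^*$ only through $p:=G(s)^\top\nabla_s d^*$.

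\textbf{Part (i).} Decompose $\mathbb{R}^n=\mathrm{im}(G(s))\oplus\mathrm{im}(G(s))^\perp$. We have $\mathrm{im}(G)^\perp=\ker(G^\top)$. It follows that
\[
G^\top\nabla_s d^*=G^\top\Pi_{\mathcal C(s)}\nabla_s d^*=G^\top\nabla_s^{\mathcal C}d^*.
\]
The minimizer, and hence the feedback law $\bar a$ of Theorem~\ref{thm:hjb}, is therefore a function of $(s,\nabla_s^{\mathcal C}d^*)$ alone. This argument needs no quadratic-cost assumption, only that the argmin is well defined, as in Theorem~\ref{thm:hjb}.

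\textbf{Part (ii).} Under quadratic cost, Theorem~\ref{thm:hjb} gives $a^*=-R^{-1}G^\top\nabla_s d^*=-R^{-1}G^\top\nabla_s^{\mathcal C}d^*$. Submultiplicativity and the hypothesis then give
\[
\|a^*\|\le\|R^{-1}\|_{\mathrm{op}}\|G\|_{\mathrm{op}}\,\|\nabla_s^{\mathcal C}d^*\|\le\rho\,\|R^{-1}\|_{\mathrm{op}}\|G\|_{\mathrm{op}}\|\nabla_s d^*\|.
\]
I will make the word \emph{uninformative} precise as follows. Write the unit direction as $\widehat{\nabla_s d^*}=\Pi_{\mathcal C}\widehat{\nabla_s d^*}+\Pi_{\mathcal C^\perp}\widehat{\nabla_s d^*}$. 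The first component has norm at most $\rho$. The second has norm at least $\sqrt{1-\rho^2}$, but it lies in $\ker G^\top$ and so has no effect on $a^*$. Consequently, two goals $g,g'$ whose unit gradients differ only in the $\mathcal C^\perp$ component, or by $O(\rho)$, produce optimal actions within $O(\rho)$ times the scale bound. The direction $\dt$ thus carries at most an $O(\rho)$ fraction of action-relevant information.

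If $\dt$ is computed from $d_\psi$ rather than $d^*$, I will use the same stability argument as in Theorem~\ref{thm:plan-inv} (Glaeser plus the unit-vector lemma) to carry this over up to $O(\sqrt{\delta})$ error. This is the step I expect to be the main obstacle. The statement is qualitative, so I would state it as an explicit Lipschitz estimate on the map $\widehat{\nabla_s d^*}\mapsto a^*$ restricted to $\mathcal C$, rather than attempt an information-theoretic claim.

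\textbf{Part (iii).} Apply the same formula with $z$ as the target. Suppose $\|\nabla_s^{\mathcal C}d^*(s,z)\|\ge\kappa\|\nabla_s d^*(s,z)\|$ for some constant $\kappa>0$. The rate of progress along the optimal feedback is
\[
\frac{d}{dt}d^*(s(t),z)=\nabla_s d^*\cdot f_0+(\nabla_s^{\mathcal C}d^*)^\top G\,a^*.
\]
Substituting $a^*$ turns the control term into
\[
-(\nabla_s^{\mathcal C}d^*)^\top G R^{-1}G^\top\nabla_s^{\mathcal C}d^*\le-\lambda_{\min}^{+}\,\kappa^2\|\nabla_s d^*\|^2<0,
\]
where $\lambda_{\min}^{+}$ is the smallest positive eigenvalue of $GR^{-1}G^\top$ on $\mathcal C$. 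This is strictly negative, so the controllable component of $\dt(s,z)$ determines a nonzero action that strictly decreases $d^*(\cdot,z)$.

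When the dynamics have drift, I will use the HJB identity $c+\nabla d^*\cdot f=0$ at the optimum. It gives $\tfrac{d}{dt}d^*=-c\le 0$, with strict progress quantified by the quadratic action cost $\tfrac12 a^{*\top}Ra^*\ge\Theta(\kappa^2\|\nabla_s d^*\|^2)$. This is the lower-bound counterpart to (ii), and it justifies routing through such a subgoal $z$.
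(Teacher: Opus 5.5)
Your parts (i) and (ii) match the paper's proof. The paper likewise discards the $a$-independent term $\nabla_s d^*\cdot f_0$, uses $\mathcal C(s)^\perp=\ker G(s)^\top$ to replace $G^\top\nabla_s d^*$ by $G^\top\nabla_s^{\mathcal C}d^*$, and bounds $\|a^*\|$ by submultiplicativity. For the word ``uninformative'' the paper only remarks that any $a$ with $G(s)a\perp\nabla_s^{\mathcal C}d^*$ is approximately optimal up to second order. Your split of $\widehat{\nabla_s d^*}$ into a $\mathcal C$-part of norm at most $\rho$ and a $\ker G^\top$-part invisible to $a^*$ makes the same point more explicitly.

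The genuine difference is in (iii). The paper states that $\|a^*(s,z)\|=\Theta(\|\nabla_s d^*(s,z)\|)$ follows ``directly from (ii)'', although (ii) supplies only the upper bound. It then asserts $\dot s\in\mathcal C(s)$, which holds only when $f_0(s)\in\mathcal C(s)$. You instead prove progress as a Lyapunov decrease along the optimal feedback: $\tfrac{d}{dt}d^*(s(t),z)=-c(s,a^*)\le-\tfrac12 a^{*\top}Ra^*\le-\tfrac12\lambda^+_{\min}\kappa^2\|\nabla_s d^*\|^2$. This uses that $GR^{-1}G^\top$ is positive definite on $\mathcal C(s)=\mathrm{im}\,G(s)$. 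It gives a quantitative, drift-robust reading of ``the actor can make progress toward $z$'', which a magnitude bound on $a^*$ does not literally deliver.

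Two small points. First, your initial computation in (iii) is conclusive only when $f_0\equiv 0$, since it leaves the sign of $\nabla_s d^*\cdot f_0$ open. Lead with the HJB-identity argument, which covers both cases. Second, the transfer to $d_\psi$ via the Glaeser and unit-vector lemmas is not part of the paper's proof of this theorem; the learned gradient is treated only informally in the AntSoccer discussion. So that step is an optional strengthening, not a required one.
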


{\bf Intuition:} $G(s)^\top$ annihilates $\mathcal{C}(s)^\perp$, so only the controllable component of the gradient enters the action selection; when this component is small, every action with $G(s)a\perp\nabla_s^{\mathcal C}d^*$ is approximately optimal and the directional signal carries no usable information. The full proof and the AntSoccer instantiation (where the failure is on the \emph{learned} gradient $\nabla_s d_\psi$ rather than $\nabla_s d^*$, due to a narrow visited pool that does not expose ball-relevant directions to InfoNCE) are in Appendix~\ref{app:thm3-proof} and~\ref{app:antsoccer-theory}.

\paragraph{Compositional Reading.} In heuristic-search vocabulary, $\psi$ is a learned consistent quasimetric (consistency from the contrastive-successor-feature triangle inequality of \citet{Myers2024LearningTD}); $\dt$ is the negative gradient of this quasimetric; the actor performs greedy best-first descent in continuous time. The waypoint pool plays the role of an open list during training; at deployment, once $\psi$ has been trained, the list collapses to $\{g\}$ and the policy becomes a standard goal-conditioned controller. We frame this as an analogy motivating the algorithm rather than a formal claim: full A$^*$-style admissibility ($d_\psi\le d^*$ pointwise) is not implied by InfoNCE training and is not required; the formal claims are Theorems~\ref{thm:hjb}--\ref{thm:fail}.

\begin{figure*}[t]
  \centering
  \includegraphics[width=\linewidth]{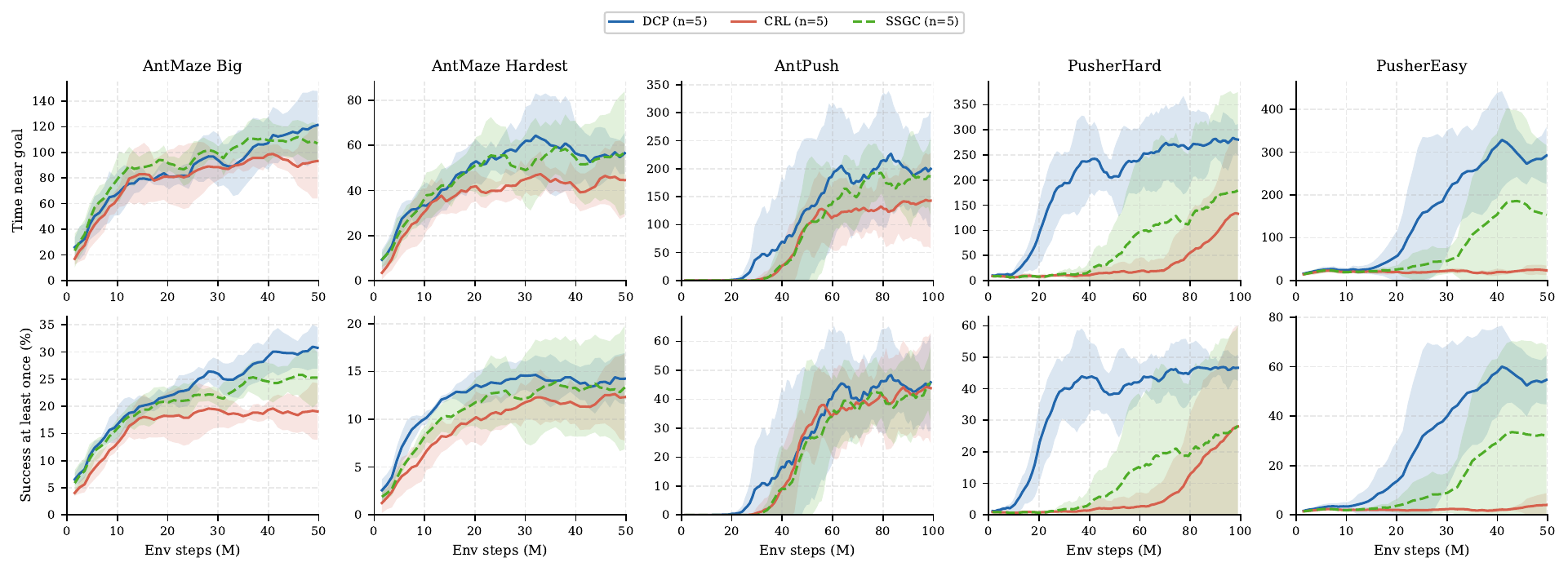}\\[2pt]
  \includegraphics[width=0.85\linewidth]{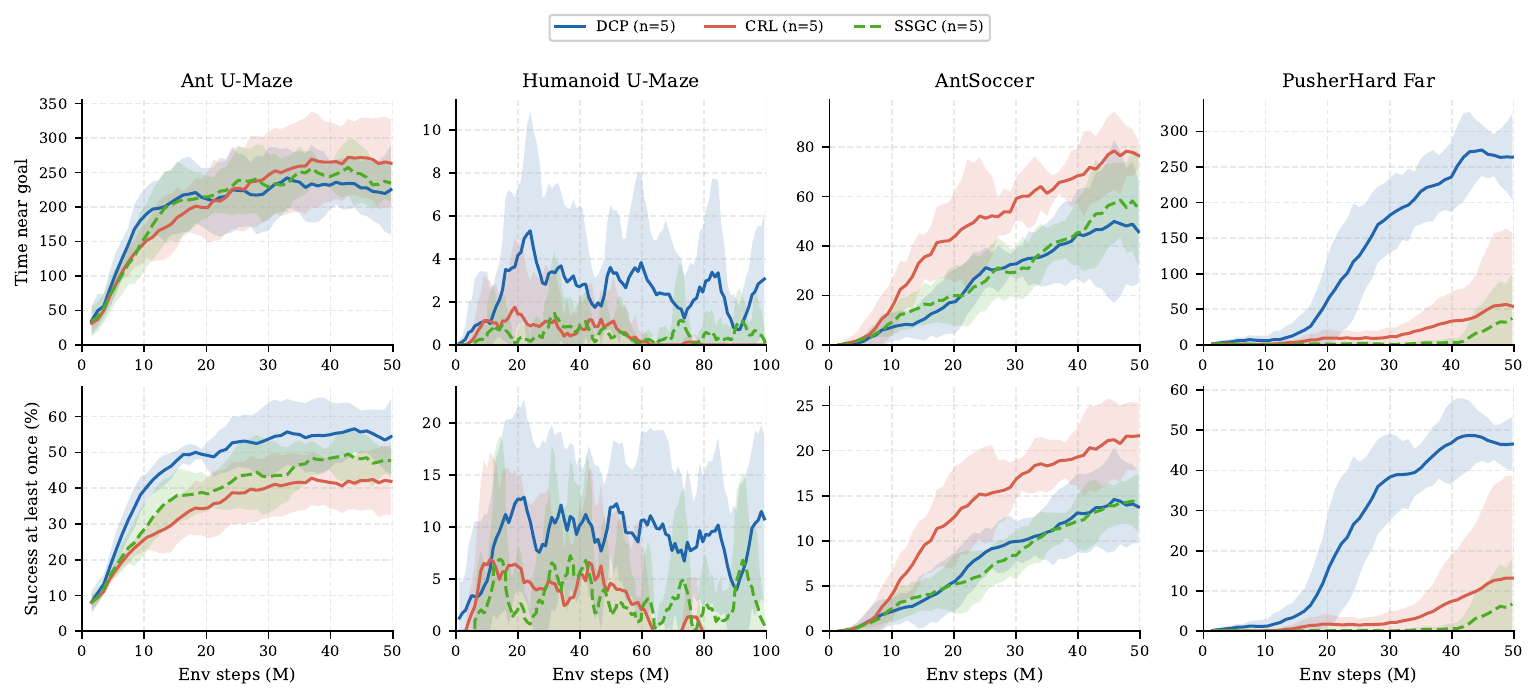}
  \caption{Learning curves for time near goal (top of each strip) and success at least once (bottom) across all nine environments. Lines show five-seed means and shaded bands show 95\% confidence intervals over seeds. AntPush, PusherHard, and Humanoid U-Maze use 100M-step budgets; the remaining environments use 50M-step budgets. Final five-seed values are summarized in Table~\ref{tab:summary}, with checkpointed means and confidence intervals in Appendix~Tables~\ref{tab:appx-eval-episode-success-any}--\ref{tab:appx-eval-episode-success}.}
  \label{fig:main-results}
\end{figure*}

\section{Experimental Results}
\label{sec:experiments}

\begin{table}[t]
\centering
\caption{Final DCP vs.\ CRL comparison across nine environments (five seeds each).}
\label{tab:summary}
\scriptsize
\setlength{\tabcolsep}{4pt}
\renewcommand{\arraystretch}{0.92}
\begin{tabular}{lccc}
\toprule
Environment & Task & Time Near Goal & Success $\geq$1 \\
\midrule
AntMaze Big      & Navigation     & $\uparrow$ DCP & $\uparrow$ DCP \\
AntMaze Hardest  & Navigation     & $\uparrow$ DCP & $\uparrow$ DCP \\
Ant U-Maze       & Navigation     & $\downarrow$ CRL & $\uparrow$ DCP \\
Humanoid U-Maze  & Navigation     & $\uparrow$ DCP & $\uparrow$ DCP \\
AntPush          & Nav + Manip    & $\uparrow$ DCP & $\approx$ \\
PusherEasy       & Manipulation   & $\uparrow$ DCP & $\uparrow$ DCP \\
PusherHard       & Manipulation   & $\uparrow$ DCP & $\uparrow$ DCP \\
PusherHard Far   & Manipulation   & $\uparrow$ DCP & $\uparrow$ DCP \\
AntSoccer        & Manipulation   & $\downarrow$ CRL & $\downarrow$ CRL \\
\bottomrule
\end{tabular}
\caption*{$\uparrow$/$\downarrow$ mark the higher final mean; $\approx$ marks a small or uncertainty-overlapped difference. Exact checkpointed means and 95\% confidence intervals are in Appendix~Tables~\ref{tab:appx-eval-episode-success-any}--\ref{tab:appx-eval-episode-success}. This table is an effect-size summary, not a significance test.}
\end{table}

\subsection{Setup and Environment Details}
We evaluate across nine environment configurations using Brax~\cite{Freeman2021BraxA}. Unless noted, runs use 512 parallel environments, 50M env steps, episode length 1000, unroll length 62, batch size 256, replay buffer size 10\,000, discount $\gamma{=}0.99$, and learning rate $3{\times}10^{-4}$.

AntPush, PusherHard, and Humanoid U-Maze are trained for 100M steps; Humanoid U-Maze uses 128 parallel environments and batch size 128 due to compute constraints. The subgoal pool holds 512 states; 32 candidates are scored every 25 steps. We use backward InfoNCE as the contrastive loss and negative L2 norm as the energy function, following the official JaxGCRL run scripts~\cite{Bortkiewicz2024AcceleratingGR}; CRL numbers may therefore differ from those reported in the JaxGCRL paper, which uses symmetric InfoNCE.
Primary experiments use five seeds; Figure~\ref{fig:main-results} reports learning curves with mean and 95\% confidence intervals over seeds, and Appendix~Tables~\ref{tab:appx-eval-episode-success-any}--\ref{tab:appx-eval-episode-success} give checkpointed values including the final checkpoint. The baseline throughout the main comparison is CRL~\cite{Eysenbach2022ContrastiveLA} with identical architecture, hyperparameters, and training budget; the only difference is actor conditioning. We therefore interpret the results as a controlled interface comparison within one strong online GCRL backbone, rather than as an exhaustive benchmark over all goal-conditioned RL families.
\begin{compactitem}
 \item \emph{AntMaze Big / Hardest}: a quadruped navigates increasingly complex mazes to reach a sparse goal.
 \item \emph{Humanoid U-Maze}: a 17-DOF humanoid navigates a U-shaped maze; the same task structure as AntMaze but with a far higher-dimensional body (268 state dimensions vs.\ 29).
 \item \emph{PusherEasy / PusherHard}: a 7-DOF arm pushes a cylinder to a goal XY position; Easy and Hard differ in goal-distribution difficulty.
 \item \emph{AntPush}: a quadruped navigates a room with a movable box obstacle to reach a goal XY position encoding the ant's own position.
 \item \emph{AntSoccer}: a quadruped must kick a ball to a target position; the goal is defined on the ball's XY position.
\end{compactitem}

We report two metrics, both using the same goal-region threshold:
\begin{compactitem}
 \item \emph{Time near goal} (primary): average timesteps per episode within the goal region, measuring sustained proximity.
 \item \emph{Success at least once}: fraction of episodes entering the goal region at least once, measuring reaching ability.
\end{compactitem}

\subsection{Qualitative Evaluation: Quasimetric Landscape}
\label{sec:qualitative}

\begin{figure*}[t]
  \centering
  \begin{minipage}[c]{0.6\linewidth}
    \centering
    \includegraphics[width=\linewidth]{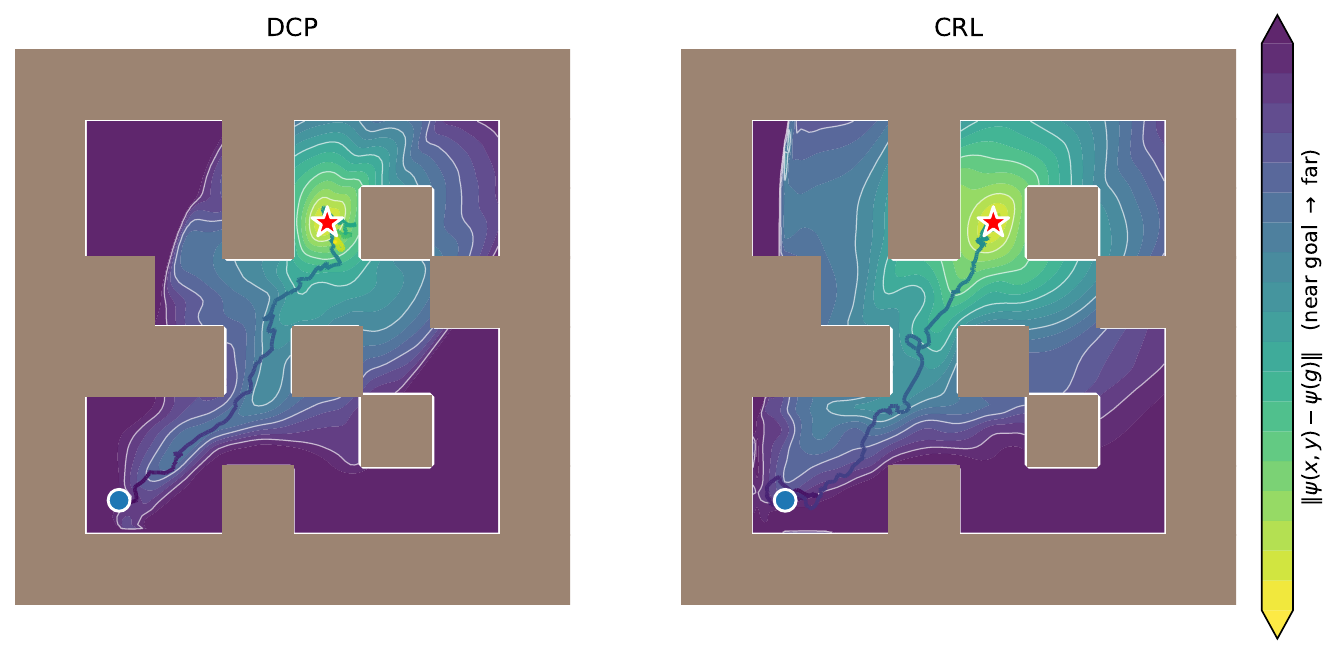}\\
    \footnotesize (a) $\psi$-distance landscape
  \end{minipage}\hfill
  \begin{minipage}[c]{0.36\linewidth}
    \centering
    \includegraphics[width=0.48\linewidth,trim=0 40 0 40,clip]{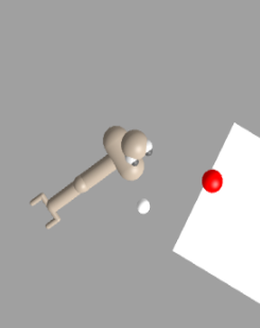}\hspace{2pt}%
    \includegraphics[width=0.48\linewidth,trim=0 40 0 40,clip]{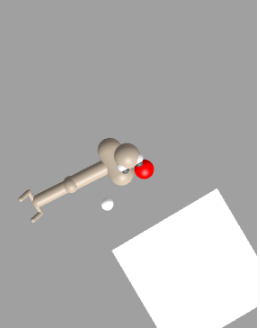}\\
    \footnotesize (b) PusherHard goal geometry
  \end{minipage}
  \caption{\textbf{(a)} Iso-distance contours of $\|\psi(x,y)-\psi(g)\|$ over AntMaze Big under DCP (left) and CRL (right) on a seed where both reach the goal; contours bend around walls in both panels (online quasimetric), and DCP's basin is sharper along the start--goal corridor. \textbf{(b)} PusherHard goal geometry (red: goal; white: cylinder). Left: PusherHard-Far (far arc, natural push). Right: default (near arc, requires repositioning behind the cylinder).}
  \label{fig:qualitative}
\end{figure*}

Figure~\ref{fig:qualitative}(a) renders $\|\psi(x,y) - \psi(g)\|$ over AntMaze Big at convergence: DCP's actor consumes the gradient of this field; CRL receives only the raw goal coordinate. We show a seed on which both methods reach the goal, so the comparison is about $\psi$ geometry rather than outcomes. Two structural claims are visible. First, contours bend around walls in both panels -- InfoNCE produces an online quasimetric encoding reachability through corridors~\cite{Wang2023OptimalGR,Myers2024LearningTD}, the same structure HILP~\cite{Park2024FoundationPW} obtains offline via explicit Hilbert-metric supervision. Second, DCP's basin is tight along the reachable corridor while CRL's is more diffuse, consistent with the joint-training coupling: DCP's actor reads $\psi$'s gradient at every step, so any $\psi$ drift away from a tight reachable basin directly hurts the policy gradient and is corrected during training; CRL's actor is $\psi$-independent and so does not impose this constraint. App.~Fig.~\ref{fig:psi-landscape-seed1} shows the same comparison on a seed where CRL fails; the diffuse-basin pattern is more pronounced there.

\subsection{Empirical Performance}
\label{sec:results}

\paragraph{Navigation -- AntMaze and HumanoidMaze.}
DCP improves reaching across the navigation tasks and is especially useful as the body dynamics become harder. On the AntMaze variants, DCP generally improves success-at-least-once over CRL, while time near goal is similar or better except for the final Ant U-Maze checkpoint. Humanoid U-Maze is difficult for all methods, but DCP is the only method with consistent nonzero progress, suggesting that directional conditioning remains useful even when the same maze structure is paired with a much higher-dimensional body. SSGC is competitive on the lower-dimensional mazes but does not scale as cleanly to Humanoid, where raw subgoal coordinates create a harder conditioning problem than directions in $\psi$-space.

\paragraph{Object Manipulation -- Pusher and AntPush.}
DCP has its clearest gains on tasks that require continuous corrective control around contact or obstacles. On the Pusher family, DCP improves both reaching and sustained proximity, while SSGC often lags CRL, indicating that raw subgoal conditioning is insufficient. AntPush also favors DCP on sustained occupancy: the time-near-goal gap opens during training and remains visibly higher for DCP across the later part of the 100M-step run, while success-at-least-once is comparable at the final checkpoint. Thus the AntPush gain is best read as a sustained-occupancy improvement rather than a first-contact improvement.

\paragraph{Goal Geometry Ablation.}
CRL performs worse on PusherHard than in the original JaxGCRL paper~\cite{Bortkiewicz2024AcceleratingGR}. We traced this to a goal distribution shift across JAX/Brax versions and found that our PRNG biases goals toward the near arc (Fig.~\ref{fig:qualitative}(b)), requiring repositioning behind the object. Scaled CRL~\cite{Wang20251000LN} documents similar discrepancies. To verify, we created \emph{PusherHard-Far}, constraining goals to the far arc. CRL improves relative to the harder near-arc setting, but DCP still leads on the final metrics, confirming robustness to goal geometry.

\paragraph{Failure Case -- AntSoccer.}\label{sec:antsoccer}
AntSoccer is a failure case for DCP, with CRL $>$ DCP $>$ SSGC throughout training. The mechanism is analysed in Section~\ref{app:antsoccer-theory}: the goal coordinate is the ball's position, and the visited pool early in training rarely contains states with displaced balls, so the InfoNCE objective receives little signal that distinguishes ball-position changes from ant-position changes. Empirically $\nabla_s d_\psi(s, g_{\text{ball}})$ lies in the controllable subspace (so the actor receives a non-trivial signal) but its projection onto the directions that actually reduce $d^*(s, g_{\text{ball}})$ is small. CRL is less exposed to this learned-gradient misalignment because its actor receives the raw goal coordinate, while the InfoNCE loss still provides a weaker but non-zero action-value signal through the SAC critic.

\subsection{Summary and Discussion}
DCP's advantage grows with task difficulty (Table~\ref{tab:summary}): the clearest gains appear on manipulation and obstacle-interaction tasks such as Pusher and AntPush, and on high-dimensional navigation such as Humanoid U-Maze. SSGC's underperformance on several manipulation tasks isolates the source: subgoal \emph{scoring} can help, but conditioning on raw subgoal coordinates creates a train--eval coordinate-distribution mismatch; DCP avoids this by mapping both training-time and deployment-time conditioning into the same $\psi$-space direction. DCP carries no deployment overhead beyond CRL: at test time $\dt$ is computed directly from $g$, with no pool or candidate scoring -- an advantage over methods requiring waypoint graphs or planner modules at test time.

\subsection{Saturation-Aware Zero-Shot Deployment}
\label{sec:pburst}
The deployment-time policy can be improved \emph{without retraining} by replacing the action with $\mathcal{U}[-1,1]$ for $L=10$ steps every $P=100$ steps. We hypothesize this helps DCP because DCP recomputes $\dt$ at every step from the current state -- a noise-induced perturbation produces a fresh corrective input -- whereas CRL reads only the static raw goal and has no analogous channel. Table~\ref{tab:pburst} reports paired deployment results across the same tasks. The intervention improves DCP more consistently than CRL, with the largest differentials on AntPush and the Pusher tasks. Per-environment action-saturation rates (Appendix~\ref{app:saturation}) confirm the precondition that deterministic deployment leaves headroom but do not predict the magnitude of $\Delta$; the CRL control indicates that the effect is tied to DCP's direction recomputation rather than to generic SAC action saturation.

\begin{table}[t]
\centering
\caption{Periodic-burst Zero-shot Deployment ($P{=}100$, $L{=}10$). Cumulative reach (\%) over 1000 steps, 3 seeds; bold $\Delta$ exceeds 1$\sigma$. Final col is $\Delta_{\text{DCP}}-\Delta_{\text{CRL}}$. Per-env saturation rates in Appendix~\ref{app:saturation}.}
\label{tab:pburst}
\setlength{\tabcolsep}{3pt}
\renewcommand{\arraystretch}{0.95}
\resizebox{\linewidth}{!}{%
\begin{tabular}{lccccc}
\toprule
Env & DCP base & DCP $\Delta$ & CRL base & CRL $\Delta$ & DCP $-$ CRL \\
\midrule
Ant U-Maze       & 59.4\,$\pm$\,8.8  & \textbf{+10.4\,$\pm$\,7.8} & 44.8\,$\pm$\,5.3  & \textbf{+14.6\,$\pm$\,5.3} & $-$4.2 \\
AntMaze Big      & 33.3\,$\pm$\,8.2  & \textbf{+11.5\,$\pm$\,1.5} & 26.0\,$\pm$\,2.9  & +5.2\,$\pm$\,3.9          & +6.2  \\
AntMaze Hardest  & 15.6\,$\pm$\,6.8  & +2.1\,$\pm$\,2.9          & 19.8\,$\pm$\,5.3  & $-$5.2\,$\pm$\,2.9        & +7.3  \\
AntPush          & 45.8\,$\pm$\,1.5  & \textbf{+22.9\,$\pm$\,8.2}& 43.8\,$\pm$\,14.2 & $-$7.3\,$\pm$\,11.5       & \textbf{+30.2} \\
PusherEasy       & 44.8\,$\pm$\,3.9  & \textbf{+20.8\,$\pm$\,3.9}&  1.0\,$\pm$\,1.5  & 0.0\,$\pm$\,0.0           & \textbf{+20.8} \\
PusherHard       & 52.1\,$\pm$\,2.9  & \textbf{+12.5\,$\pm$\,5.1}& 19.8\,$\pm$\,25.8 & +2.1\,$\pm$\,2.9          & \textbf{+10.4} \\
PusherHard Far   & 53.1\,$\pm$\,2.6  & $-$1.0\,$\pm$\,7.8        &  8.3\,$\pm$\,7.4  & $-$1.0\,$\pm$\,5.3        & 0.0   \\
Humanoid U-Maze  &  6.2\,$\pm$\,8.8  & $-$1.0\,$\pm$\,9.0        &  0.0\,$\pm$\,0.0  & 0.0\,$\pm$\,0.0           & $-$1.0 \\
AntSoccer        & 11.5\,$\pm$\,3.9  & +4.2\,$\pm$\,8.2          & 24.0\,$\pm$\,5.3  & $-$7.3\,$\pm$\,3.9        & \textbf{+11.5} \\
\midrule
overall (paired)   &                   & +9.1\,$\pm$\,10.6         &                   & +0.1\,$\pm$\,8.3          & \textbf{+9.0} \\
W/T/L              &                   & 18/5/4                    &                   & 8/9/10                    &       \\
\bottomrule
\end{tabular}%
}
\end{table}

\section{Conclusion and Future Work}

We presented Direction-Conditioned Policies (DCP), an online GCRL method that decomposes goal-reaching into a subgoal-scoring step and a direction-conditioned actor sharing one InfoNCE representation. The two components train jointly and factor cleanly at deployment: the pool is removed and the actor consumes the unit direction toward $g$ directly, so DCP carries no test-time overhead beyond CRL. Empirically, DCP improves over CRL on most five-seed final metrics across nine Brax environments, with the largest gains on Pusher tasks, a clear sustained-proximity gain on AntPush, and nonzero progress on Humanoid U-Maze. The SSGC ablation isolates the source of the gain: subgoal \emph{scoring} is valuable, but conditioning on raw subgoal coordinates hurts at deployment due to a train--eval coordinate-distribution mismatch that directional abstraction in $\psi$-space resolves.

Theoretically, DCP is the online realization of the Hamilton--Jacobi--Bellman direction-sufficiency principle (Theorem~\ref{thm:hjb}): under control-affine dynamics, the optimal action depends on the goal only through the gradient of the cost-to-go, and our planning-invariance bound (Theorem~\ref{thm:plan-inv}) shows that on-path subgoal scoring keeps the actor's training and deployment conditioning inputs within representation error and geodesic slack of each other. The AntSoccer failure case is consistent with the theory's controllable-subspace characterization (Theorem~\ref{thm:fail}): the visited pool early in training rarely contains states with displaced balls, so the InfoNCE objective receives little signal that distinguishes ball-position from ant-position changes, and the learned gradient $\nabla_s d_\psi(s, g_{\text{ball}})$ is dominated by ant-state directions that do not reduce the true cost-to-goal. This identifies a concrete precondition for DCP: the visited pool must expose the goal-relevant directions to InfoNCE for the learned direction to be useful.

Several directions follow. First, the scoring rule admits more principled formulations: quasimetric-based scores~\citep{Wang2023OptimalGR,Myers2024LearningTD}, Eikonal-regularized distances~\citep{Giammarino2025Eikonal}, or reachability-aware scoring that explicitly accounts for action feasibility from the current state. Second, the magnitude of the gain from changing only the actor's conditioning suggests the broader design space of inputs beyond $[\dt, \rt]$ is worth exploring -- for example, second-order $\psi$-curvature, multi-step direction prediction, or composite conditioning that combines $\dt$ with a learned heuristic. Third, DCP's interface can be ported to other online goal-conditioned backbones, including value-based variants, to separate representation-family effects from conditioning-interface effects; a separate ablation of actor-objective variants would further isolate the loss from the input representation. Fourth, demo-seeding the pool can bootstrap DCP where early pool coverage is the bottleneck (as in AntSoccer); since the pool stores only states, no action labels are required, and a small number of teleoperation traces or scripted demonstrations can in principle restore the alignment of $\nabla_s d_\psi$ with the true cost-to-goal direction. Fifth, the periodic-burst wrapper result hints that DCP's continuous re-computation of $\dt$ from the current state opens up a class of zero-shot deployment-time interventions that exploit closed-loop direction-sensitivity, which is absent in raw-goal-conditioned policies. Finally, sim-to-real transfer on a SO-101 arm via LeRobot~\citep{Cadne2026LeRobotAO} is the intended downstream setting: the compositional structure (scoring + direction conditioning) and the absence of any deployment-time pool, planner, or graph make DCP a natural fit for low-latency robot control where the direction $\dt$ can be recomputed at every step from the current observation.

\section*{Impact Statement}
This paper presents work whose goal is to advance the field of Machine Learning. There are many potential societal consequences of our work, none of which we feel must be specifically highlighted here.

\bibliography{refs}
\bibliographystyle{icml2026}

\newpage
\appendix
\onecolumn
\raggedbottom
\section{Theory: Setup, Proofs, and Extended Discussion}
\label{app:theory}

This appendix collects the full assumptions, proofs, and remarks that the body sketches in Section~\ref{sec:theory}.

\subsection{Setup and Notation}
\label{app:theory-setup}

We work in continuous time on a state space $\mathcal{S}\subseteq\mathbb{R}^n$ with action space $\mathcal{A}\subseteq\mathbb{R}^m$. We assume:
\begin{itemize}\setlength\itemsep{0pt}
\item[(A1)] Deterministic dynamics $\dot s(t) = f(s(t),a(t))$ with $f:\mathcal{S}\times\mathcal{A}\to\mathbb{R}^n$ continuous and\\locally Lipschitz in $s$ uniformly in $a$.
\item[(A2)] Lower semicontinuous running cost $c:\mathcal{S}\times\mathcal{A}\to\mathbb{R}_{\ge 0}$.
\item[(A3)] For every $g\in\mathcal{S}$ the goal-reaching cost
\begin{equation*}
\begin{aligned}
d^*(s,g) \;=\;& \inf_{a(\cdot)} \int_0^{T_g} c(s(t),a(t))\,\mathrm{d}t, \\
& \text{s.t. } s(0)=s,\;s(T_g)=g,
\end{aligned}
\end{equation*}
is finite for $s$ in the reachable set of $g$ and continuously differentiable on $\mathcal{S}\setminus\{g\}$.
\end{itemize}
Assumption~(A3) is the standard regularity hypothesis under which classical solutions to the Hamilton--Jacobi--Bellman (HJB) equation exist; the extension to viscosity solutions when $d^*$ is only semiconcave is well-known~\citep{BardiCD1997} and the theorems below extend in that setting (see Remark~\ref{rem:viscosity}).

Let $\psi:\mathcal{S}\to\mathbb{R}^k$ be a learned representation (in our case, the goal encoder of a CRL critic). The induced distance is $d_\psi(s,g) := \|\psi(g)-\psi(s)\|$. DCP's actor consumes the unit direction and magnitude of this learned distance at the current state:
\begin{equation}
\dt(s,g) := \frac{-\nabla_s\,d_\psi(s,g)}{\|\nabla_s\,d_\psi(s,g)\|}, \quad \rt(s,g) := d_\psi(s,g),
\label{eq:app-dcp-cond}
\end{equation}
with the sign convention chosen so that $\dt$ points \emph{towards} the goal. The implementation uses the equivalent $\psi$-space direction $\dt = (\psi(g)-\psi(s))/\|\psi(g)-\psi(s)\|$. These two unit vectors live in different spaces -- $\mathbb{R}^n$ (state) and $\mathbb{R}^k$ ($\psi$) -- and are not literally colinear; they are related by the pullback $\nabla_s\,d_\psi(s,g) = -J_\psi(s)^\top\,\widehat{\psi(g)-\psi(s)}$, where $J_\psi(s)$ is the Jacobian of $\psi$ at $s$. When $J_\psi$ has full rank, the two encode the same information and the policy network can implicitly recover the state-space gradient from the $\psi$-space direction. We use the gradient form throughout the theory because it makes the connection to HJB direct; the $\psi$-space form is what the policy network actually receives.

\subsection{Proof of Theorem~\ref{thm:hjb} (HJB Direction Sufficiency)}
\label{app:thm1-proof}

\begin{proof}[Proof of Theorem~\ref{thm:hjb}]
Under~(A1)--(A3) the value function $d^*(\cdot,g)$ satisfies the HJB equation pointwise on $\mathcal{S}\setminus\{g\}$:
\begin{equation}
\inf_{a\in\mathcal{A}}\Bigl[\,c(s,a) + \nabla_s\,d^*(s,g)^\top f(s,a)\,\Bigr] \;=\; 0,
\label{eq:app-hjb}
\end{equation}
with the optimal feedback given by the minimizer of the bracket~\citep[Sec.~III.2]{BardiCD1997},
\begin{equation}
a^*(s,g) \;\in\; \arg\min_{a\in\mathcal{A}}\Bigl[\,c(s,a) + \nabla_s\,d^*(s,g)^\top f(s,a)\,\Bigr].
\label{eq:app-hjb-argmin}
\end{equation}
The objective on the right depends on $g$ only through the linear term $\nabla_s d^*(s,g)^\top f(s,a)$. Define
\(
\bar a(s,p) := \arg\min_{a\in\mathcal{A}}\bigl[c(s,a) + p^\top f(s,a)\bigr].
\)
Then by~\eqref{eq:app-hjb-argmin}, $a^*(s,g) = \bar a(s,\nabla_s d^*(s,g))$ at every $(s,g)$ where $\nabla_s d^*(s,g)$ exists and the minimizer is unique; uniqueness follows whenever $c$ is strictly convex in $a$, which is implied by the second hypothesis ($R\succ 0$).

For the control-affine quadratic-cost case, substituting $f$ and $c$ into~\eqref{eq:app-hjb-argmin} gives
\(
a^*(s,g) = \arg\min_a\bigl[\tfrac12 a^\top R(s)a + \nabla_s d^*(s,g)^\top G(s)a\bigr].
\)
The first-order condition $R(s)a + G(s)^\top \nabla_s d^*(s,g) = 0$ yields the closed form $a^*(s,g) = -R(s)^{-1}G(s)^\top \nabla_s d^*(s,g)$, which is linear in $\nabla_s d^*$. Splitting magnitude from direction gives $a^*(s,g) = -\|\nabla_s d^*(s,g)\|\,R(s)^{-1}G(s)^\top \widehat{\nabla_s d^*(s,g)}$, so the unit direction determines the orientation and the magnitude scales the action.
\end{proof}

\begin{remark}[Connection to Pontryagin's Principle]
\label{rem:pmp}
Theorem~\ref{thm:hjb} is the verification-form HJB statement of Pontryagin's Maximum Principle~\citep{Pontryagin1962,Bertsekas2017DP}: the costate $\lambda(t) = \nabla_s d^*(s(t),g)$ is the dual variable, and $a^*$ minimizes the Hamiltonian $H(s,a,\lambda) = c(s,a)+\lambda^\top f(s,a)$. Equivalently, $-\nabla_s d^*$ is the geodesic tangent of the optimal goal-reaching trajectory at $s$.
\end{remark}

\begin{remark}[Application to DCP]
Theorem~\ref{thm:hjb} identifies $(s, \nabla_s d^*(s,g))$ -- equivalently, $(s,\widehat{\nabla_s d^*}, \|\nabla_s d^*\|)$ -- as the minimal sufficient statistic for the optimal action under control-affine dynamics. DCP conditions the actor on $(s, \dt, \rt)$ where $\dt$ is the unit direction of the gradient of a \emph{learned} approximation $d_\psi$ and $\rt = d_\psi(s,g)$ is the learned scale. Both components are necessary: $\dt$ supplies the orientation required by the linear feedback law of Theorem~\ref{thm:hjb}, and $\rt$ supplies the scale that determines the action magnitude. In the limit $d_\psi \to d^*$ uniformly, the actor's input recovers the HJB-sufficient statistic exactly.
\end{remark}

\begin{remark}[Viscosity Solutions]
\label{rem:viscosity}
When $d^*$ is only semiconcave (e.g.\ around mode switches or wall-corner kinks in mazes), classical differentiability fails on a measure-zero set. The proof of Theorem~\ref{thm:hjb} extends to viscosity solutions of the HJB equation~\citep{BardiCD1997}; the optimal action is determined by the proximal supergradient $\partial^- d^*(s,g)$ in place of $\nabla_s d^*$, and the conclusion holds almost everywhere along optimal trajectories. For the smooth tasks in our experiments, classical regularity holds away from goal sets and walls; we report the smooth case for clarity.
\end{remark}

\subsection{Proof of Theorem~\ref{thm:plan-inv} (Planning Invariance)}
\label{app:thm2-proof}

We restate Theorem~\ref{thm:plan-inv} with full assumption block. Define $z$ to be \emph{$\varepsilon$-on-path} from $s$ to $g$ if
\begin{equation}
d^*(s,g) \;\le\; d^*(s,z) + d^*(z,g) \;\le\; d^*(s,g) + \varepsilon.
\label{eq:app-eps-on-path}
\end{equation}
The first inequality is the triangle inequality on $d^*$ (always true on a quasimetric); the second is the relaxation that allows $z$ to lie within $\varepsilon$ of an exact geodesic. We do not prove that the inner-product scoring rule \eqref{eq:score} produces $\varepsilon$-on-path $z$; the theorem applies whenever it does, and we treat that as an assumption rather than a derivation.

Theorem~\ref{thm:plan-inv} assumes:
\begin{itemize}\setlength\itemsep{0pt}
\item[\textbf{(B0)}] \emph{Driftless control-affine} dynamics: $f(s,a) = G(s)a$ with quadratic cost $c(s,a) = \tfrac12 a^\top R(s)a$, $R(s)\succ 0$;
\item[\textbf{(B1)}] $z$ is $\varepsilon$-on-path from $s$ to $g$ in the sense of~\eqref{eq:app-eps-on-path};
\item[\textbf{(B2)}] the learned representation satisfies $|d_\psi(s',g') - d^*(s',g')|\le\delta$ for every $s'$ in a neighbourhood of $s$ and $g'\in\{z,g\}$. Both $d^*(\cdot,g')$ and $d_\psi(\cdot,g')$ are $C^1$ on this neighbourhood with gradients (in $s$) $L$-Lipschitz, for $g'\in\{z,g\}$. We further assume $\nabla_s d^*(s,\cdot)$ is $L$-Lipschitz in its goal argument on a neighbourhood of $\{z,g\}$ (this holds automatically under (B0) by the smoothness of the sub-Riemannian geodesic spheres);
\item[\textbf{(B3)}] $\|\nabla_s d^*(s,g)\| \ge m > 0$ at the evaluation state $s$.
\end{itemize}

\begin{proof}[Proof of Theorem~\ref{thm:plan-inv}]
The bound has two sources of error: (a) the geodesic-decomposition slack $\varepsilon$ converts gradient-direction agreement on $d^*$ at $s$ into an $O(\sqrt{\varepsilon})$ perturbation; (b) the representation-error budget $\delta$ converts into an $O(\sqrt{\delta})$ gradient perturbation via the Glaeser-style $L^\infty$-to-$W^{1,\infty}$ inequality. Throughout, $\dt^{*,(g')}:=\widehat{-\nabla_s d^*(s,g')}$ for $g'\in\{z,g\}$.

\medskip
\noindent\emph{Step 1: Gradient Direction Agreement on True Value Function (uses (B0)).}\quad
Define $\Phi(z) := d^*(s,z) + d^*(z,g) - d^*(s,g)\ge 0$; by~\eqref{eq:app-eps-on-path}, $0\le\Phi(z)\le\varepsilon$. We claim
\begin{equation}
\bigl\|\nabla_s d^*(s,z) - \nabla_s d^*(s,g)\bigr\| \;\le\; \sqrt{2L\,\varepsilon}.
\label{eq:app-grad-agreement}
\end{equation}
Under~(B0), the optimal trajectory from $s$ to $g$ is governed by $\dot\gamma = G(\gamma)\,a^*$, with $a^*(\gamma,g) = -R(\gamma)^{-1}G(\gamma)^\top\nabla_s d^*(\gamma,g)$. The level sets of $d^*(\cdot,g)$ are the geodesic spheres of the sub-Riemannian metric induced by $(G,R)$ on $\mathcal{S}$; $\nabla_s d^*(s,g)$ is normal to the sphere at $s$. By Bellman's principle of optimality, when $\Phi(z)=0$ the geodesic from $s$ to $g$ passes through $z$ and the segment $[s,z]$ is itself a geodesic; the geodesic spheres of $d^*(\cdot,g)$ and $d^*(\cdot,z)$ are concentric at $s$, so their normals at $s$ coincide:
\(
\nabla_s d^*(s,z)\parallel\nabla_s d^*(s,g)
\)
when $\Phi(z)=0$. For $\Phi(z)=\varepsilon>0$ a nearby exact-on-path $z'$ exists with $\|z-z'\|\le\sqrt{2\varepsilon/\mu}$, where $\mu$ is the Polyak--{\L}ojasiewicz constant of $\Phi$ (well-defined since $\Phi$ is $L$-smooth as a sum of two $L$-smooth functions, $\Phi\ge 0$ with $\min\Phi=0$ on the on-path set). Combining the on-path equality $\nabla_s d^*(s,z')\parallel\nabla_s d^*(s,g)$ with the Lipschitz-in-$z$ bound from~(B2),
\(
\|\nabla_s d^*(s,z)-\nabla_s d^*(s,z')\|\le L\|z-z'\| \le L\sqrt{2\varepsilon/\mu},
\)
and absorbing $L/\sqrt\mu$ into a redefined $L$, yields~\eqref{eq:app-grad-agreement}.

\medskip
\noindent\emph{Step 2: Unit-vector Perturbation from Representation Error (Glaeser).}\quad
For any $p,q\in\mathbb{R}^n$ with $\|q\|\ge m>0$,
\begin{equation}
\Bigl\|\tfrac{p}{\|p\|} - \tfrac{q}{\|q\|}\Bigr\| \;\le\; \tfrac{2\|p-q\|}{\max(\|p\|,\|q\|)} \;\le\; \tfrac{2\|p-q\|}{m}.
\label{eq:app-unit-stab}
\end{equation}
For $p=-\nabla_s d_\psi(s,z)$, $q=-\nabla_s d^*(s,z)$, the Glaeser-type inequality (e.g.\ \citep[Sec.~3.5]{BardiCD1997}: if $h\in C^1(\mathbb{R}^n)$ has $L$-Lipschitz gradient and $\|h\|_\infty\le\delta$, then $\|\nabla h\|_\infty\le 2\sqrt{2L\delta}$) applied to $h := d_\psi(\cdot,z) - d^*(\cdot,z)$ gives
\begin{equation}
\bigl\|\nabla_s d_\psi(s,z) - \nabla_s d^*(s,z)\bigr\| \;\le\; 2\sqrt{2L\delta}.
\label{eq:app-glaeser}
\end{equation}
The same bound holds with $g$ in place of $z$. Combined with~\eqref{eq:app-unit-stab} and~(B3),
\(
\|\dt^{(z)} - \dt^{*,(z)}\| \le 4\sqrt{2L\delta}/m
\)
and likewise for $g$.

\medskip
\noindent\emph{Step 3: Composition.}\quad
By the triangle inequality,
\(
\|\dt^{(z)}-\dt^{(g)}\| \le \|\dt^{(z)}-\dt^{*,(z)}\| + \|\dt^{*,(z)}-\dt^{*,(g)}\| + \|\dt^{*,(g)}-\dt^{(g)}\|.
\)
The first and third terms are bounded by $4\sqrt{2L\delta}/m$ each via Step 2; the middle term is bounded by $2\sqrt{2L\varepsilon}/m$ by applying~\eqref{eq:app-unit-stab} to~\eqref{eq:app-grad-agreement}, again using~(B3). Summing yields~\eqref{eq:plan-inv-bound}. The policy bound follows by $K_\pi^d$-Lipschitz continuity of $\pi$ in $d$ and $K_\pi^r$-Lipschitz continuity in $r$.
\end{proof}

\begin{remark}[Implication of Planning Invariance]
\citet{Myers2025Horizon} prove that planning invariance is a property that \emph{some} learned representations have -- specifically, those whose distance is a quasimetric in the sense of \citet{Myers2024LearningTD} -- with implications for horizon generalization. Theorem~\ref{thm:plan-inv} establishes a related statement at the actor's interface: when $d_\psi$ approximates $d^*$ uniformly to within $\delta$ and the visited-pool scoring rule returns a $\varepsilon$-on-path waypoint, the unit-direction conditioning input $\dt$ is identical (up to $O(\sqrt\delta + \sqrt\varepsilon)$) whether the actor is pointed at $z$ at training or at $g$ at deployment. We do not claim this for arbitrary representations; rather, we show that under the standard assumptions used for InfoNCE-trained quasimetrics~\citep{Myers2024LearningTD}, DCP's input distribution at training and deployment coincide \emph{by the choice of the conditioning interface}, without requiring the offline metric supervision of HILP~\citep{Park2024FoundationPW} or the curated datasets of offline quasimetric methods~\citep{Wang2023OptimalGR,Myers2025OfflineGR,Zheng2025MultistepQL}.
\end{remark}

\begin{remark}[Driftless Restriction]
\label{rem:drift}
Theorem~\ref{thm:plan-inv} is stated under the driftless hypothesis (B0). Several empirical environments contain drift terms such as gravity, joint torques with non-zero equilibria, and friction. We therefore separate the proof step that uses driftlessness from the parts that do not.
\textbf{Use of driftlessness:} Step 1 identifies the geodesic tangent at $s$ with the negative costate $-\nabla_s d^*(s,g)$ using the fact that under (B0) the optimal flow $\dot\gamma=G(\gamma)a^*$ is colinear with $-\nabla_s d^*$ modulo the metric $R^{-1}$. Under drift $f(s,a)=f_0(s)+G(s)a$, $\dot\gamma(0)=f_0(s)+G(s)a^*$ has a drift component $f_0(s)\not\in\mathcal{C}(s)$ in general; the geodesic tangent and the value gradient are no longer colinear, so this step fails as stated.
\textbf{Unaffected parts:} (a) Bellman's principle remains valid: when $z$ is exactly on an optimal path from $s$ to $g$, $a^*(s,z)=a^*(s,g)$ on $[s,z]$, so action invariance on-path is preserved without (B0). (b) The controllable-component statement of Theorem~\ref{thm:hjb} (action depends only on $G^\top\nabla_s d^*$) holds without (B0). (c) The Step 2 representation-error bound is independent of dynamics class.
A drifted version would replace the colinearity claim in Step 1 with a bound on the angle between $\dot\gamma(0)$ and $\widehat{\nabla_s^{\mathcal C}d^*}$ depending on $\|f_0\|/\|Ga^*\|$, the drift-to-control-effort ratio along the optimal trajectory; the remaining steps of the argument are unchanged. The empirical results in Section~\ref{sec:experiments}, including the CRL planning-invariance control in Section~\ref{sec:pburst}, are consistent with the conclusion holding in the drifted regime; we leave the formal extension to future work.
\end{remark}

\begin{remark}[Connection to Dual Goal Representations]
\citet{Park2025Dual} prove that the \emph{dual representation} $\phi^\vee(g) := s\mapsto d^*(s,g)$ is sufficient to recover the optimal goal-conditioned policy and is invariant to exogenous noise in the goal observation. The DCP conditioning vector $\dt$ at fixed $s$ is precisely the gradient of $\phi^\vee(g)$ at $s$: a local first-order summary of the dual representation. Theorem~\ref{thm:plan-inv} therefore establishes that this local summary is the minimal information that survives the on-path/at-goal swap; under this identification, DCP inherits the noise-invariance property of $\phi^\vee$.
\end{remark}

\subsection{Proof of Theorem~\ref{thm:fail} (Failure Mode)}
\label{app:thm3-proof}

\begin{proof}[Proof of Theorem~\ref{thm:fail}]\ 

\emph{(i)} Substituting $f(s,a)=f_0(s)+G(s)a$ into~\eqref{eq:app-hjb-argmin} gives, modulo constants in $a$,
\(
a^*(s,g) = \arg\min_a[c(s,a) + \nabla_s d^*(s,g)^\top G(s)a].
\)
The linear-in-$a$ term equals $(G(s)^\top\nabla_s d^*(s,g))^\top a$. Now $G(s)^\top\nabla_s d^* = G(s)^\top\Pi_{\mathcal C(s)}\nabla_s d^*$ because $G(s)^\top$ annihilates $\mathcal{C}(s)^\perp$ (since $\mathcal{C}(s)^\perp = \ker G(s)^\top$). Hence the optimization depends on $\nabla_s d^*$ only through its projection onto $\mathcal{C}(s)$.

\emph{(ii)} For control-affine quadratic-cost dynamics, Theorem~\ref{thm:hjb} gives $a^*(s,g) = -R(s)^{-1}G(s)^\top\nabla_s d^*(s,g) = -R(s)^{-1}G(s)^\top\nabla_s^{\mathcal C}d^*(s,g)$ by part~(i). Therefore
\(
\|a^*(s,g)\|\le\|R^{-1}\|_{\mathrm{op}}\|G\|_{\mathrm{op}}\|\nabla_s^{\mathcal C}d^*\|\le\rho\|R^{-1}\|_{\mathrm{op}}\|G\|_{\mathrm{op}}\|\nabla_s d^*\|.
\)
Any $a$ in the kernel of $G(s)^\top$ produces no change in the optimisation objective; correspondingly any $a$ with $G(s)a\perp\nabla_s^{\mathcal C}d^*$ is approximately optimal up to second order, confirming that the optimal action is underdetermined.

\emph{(iii)} Directly from~(ii), when the controllable component is bounded below by a positive fraction of the total gradient norm, $\|a^*(s,z)\| = \Theta(\|\nabla_s d^*(s,z)\|)$ and the actor receives informative directional gradient. Reaching $z$ then evolves the state along $\dot s = f(s,a^*(s,z))\in\mathcal{C}(s)$; the controllable component of $\nabla_s d^*(\cdot,g)$ may then become large at the new state.
\end{proof}

\subsection{AntSoccer as a Learned-gradient Instantiation of Theorem~\ref{thm:fail}}
\label{app:antsoccer-theory}

Theorem~\ref{thm:fail} characterizes a general failure class: directional conditioning is uninformative whenever the relevant gradient at the actor's input has small projection onto the controllable subspace. The theorem is stated in terms of the \emph{optimal-control} gradient $\nabla_s d^*$ for cleanliness, but the same mechanism applies -- and is the operative one for the trained actor -- to the \emph{learned} gradient $\nabla_s d_\psi$ that the actor actually consumes. AntSoccer is a specific instantiation of the failure class through the learned-gradient route.

In AntSoccer the goal coordinate is the ball's $(x,y)$ position. The optimal-control gradient $\nabla_s d^*(s,g_{\text{ball}})$ at an early-training state $s$ has a sizeable component along the ant-velocity coordinates (the ant must traverse the room to reach the ball before kicking) and that component is in the controllable subspace $\mathcal C(s)$, so Theorem~\ref{thm:fail}'s hypothesis is not directly invoked. The learned approximation supplies the relevant failure mechanism: $d_\psi$ is trained from rollouts that almost never contain states with displaced balls, so the InfoNCE objective receives little signal that distinguishes ball-position changes from ant-position changes. Empirically $\nabla_s d_\psi(s,g_{\text{ball}})$ lies in $\mathcal C(s)$ -- the actor receives a non-trivial signal -- but its projection onto the directions that actually reduce $d^*$ is small. The actor is conditioned on a gradient that is \emph{nominally controllable but representationally misaligned}, matching the uninformative-direction failure mode of Theorem~\ref{thm:fail} with $d_\psi$ in place of $d^*$. CRL is less exposed to this learned-gradient misalignment because its actor receives the raw goal coordinate, while the InfoNCE loss still provides a weaker but non-zero action-value signal through the SAC critic.

This gives an empirically testable prediction: \emph{DCP fails when the visited pool is too narrow to expose the goal-relevant directions to InfoNCE, even when the optimal-control gradient lies in the controllable subspace}. A direct test follows: demo-seeding the visited pool with already-kicked-ball configurations should restore $\nabla_s d_\psi$'s alignment with $\nabla_s d^*$ on the goal-relevant subspace, recovering DCP's performance. We leave this ablation to future work.

\section{Reproducibility}
\label{app:reproducibility}

\paragraph{Hardware.} All training and analysis ran on a single laptop GPU (NVIDIA RTX 4070 Laptop, 8\,GB) with an Intel i7-14700HX CPU and 16\,GB system RAM; one GPU per run.

\paragraph{Software.} JAX 0.4.38 (CUDA 12), Brax 0.10.3, MuJoCo 3.1.3 + MuJoCo MJX 3.1.3, Flax 0.10.4, Optax 0.2.5. Built on top of JaxGCRL~\citep{Bortkiewicz2024AcceleratingGR}, with the DCP agent (\texttt{jaxgcrl/agents/dcp/}) added as a sibling to the existing \texttt{crl/} and \texttt{ssgc/} agents.

\paragraph{Hyperparameters and Training Scripts.} All hyperparameters used in this paper are the JaxGCRL paper defaults for CRL and SSGC (\texttt{discounting=0.99}, \texttt{unroll-length=62}, \texttt{num-envs=512}, \texttt{batch-size=256}, \texttt{episode-length=1000}); DCP inherits the CRL hyperparameters and adds the subgoal-pool and direction-conditioning flags described in Section~\ref{sec:method}. Per-environment training-step budgets follow Table~\ref{tab:summary}.

\paragraph{Code Release.} The supplementary code package is available at \url{https://anonymous.4open.science/r/dcp-supplement-anon-CF41/}. It contains the DCP agent, baseline implementations (CRL, SSGC), the periodic-burst zero-shot wrapper, the analysis and figure-generation scripts, and the full extended-grid training script (\texttt{scripts/run\_extended\_grid.sh}). The \texttt{README.md} documents installation (\texttt{uv sync}), single-run reproduction commands, and the full grid invocation.

\paragraph{Qualitative Video Rollouts.} Two short rollout videos on the PusherHard environment (DCP vs.\ CRL, evaluation seed 123, identical task and goal) are available under \texttt{videos/} in the supplementary code package at \url{https://anonymous.4open.science/r/dcp-supplement-anon-CF41/}. DCP completes the push; CRL does not converge to a push trajectory on this seed, consistent with the quantitative gap reported in Section~\ref{sec:results}.

\section{Saturation Rate of Deterministic Deployment}
\label{app:saturation}

Table~\ref{tab:saturation} reports the fraction of action components with $|a|>0.95$ on baseline (no-burst) deployment trajectories, per environment, mean $\pm$ std over 3 seeds. High saturation indicates the deterministic policy frequently outputs near-saturated tanh-Gaussian actions, leaving little headroom for the periodic-burst wrapper to exploit. Low saturation indicates the policy operates in the linear regime of the tanh and the wrapper has nothing to decompress. The diagnostic is consistent with the $\dt$-recomputation argument as a precondition but does not predict the magnitude of $\Delta$; see the body discussion in Section~\ref{sec:pburst}.

\begin{table}[h]
\centering
\caption{Saturation rate $\Pr(|a|>0.95)$ on baseline deployment trajectories, mean $\pm$ std over 3 seeds.}
\label{tab:saturation}
\footnotesize
\setlength{\tabcolsep}{4pt}
\begin{tabular}{lcc}
\toprule
Env & DCP & CRL \\
\midrule
ant\_u\_maze       & 12.9\,$\pm$\,1.6\,\% & 16.9\,$\pm$\,3.5\,\% \\
ant\_big\_maze     & 24.0\,$\pm$\,5.6\,\% & 29.9\,$\pm$\,4.9\,\% \\
ant\_hardest\_maze & 60.7\,$\pm$\,12.0\,\% & 35.5\,$\pm$\,1.0\,\% \\
ant\_push          & 10.8\,$\pm$\,1.1\,\% & 16.3\,$\pm$\,2.5\,\% \\
pusher\_easy       & 23.7\,$\pm$\,2.2\,\% & 13.9\,$\pm$\,19.7\,\% \\
pusher\_hard       & 23.3\,$\pm$\,2.1\,\% & 12.9\,$\pm$\,18.2\,\% \\
pusher\_hard\_far  & 28.6\,$\pm$\,2.3\,\% & 84.0\,$\pm$\,6.6\,\% \\
humanoid\_u\_maze  &  0.7\,$\pm$\,0.1\,\% & 92.3\,$\pm$\,4.4\,\% \\
ant\_ball          & 33.3\,$\pm$\,3.1\,\% & 36.3\,$\pm$\,5.5\,\% \\
\bottomrule
\end{tabular}
\end{table}

\section{Conditioning Landscape on a Seed Where CRL Fails}
\label{app:psi-landscape-seed1}

This appendix complements Figure~\ref{fig:qualitative}(a) with the same landscape diagnostic on an AntMaze Big seed where DCP reaches the goal and CRL stalls. We evaluate the learned distance $\|\psi(x,y)-\psi(g)\|$ over the maze grid after training to inspect the geometry that the two agents learn. The figure is qualitative: it illustrates the representation geometry behind one failure seed, while the aggregate comparison is given by the learning curves and checkpoint tables.

\begin{figure}[h]
  \centering
  \includegraphics[width=0.85\linewidth]{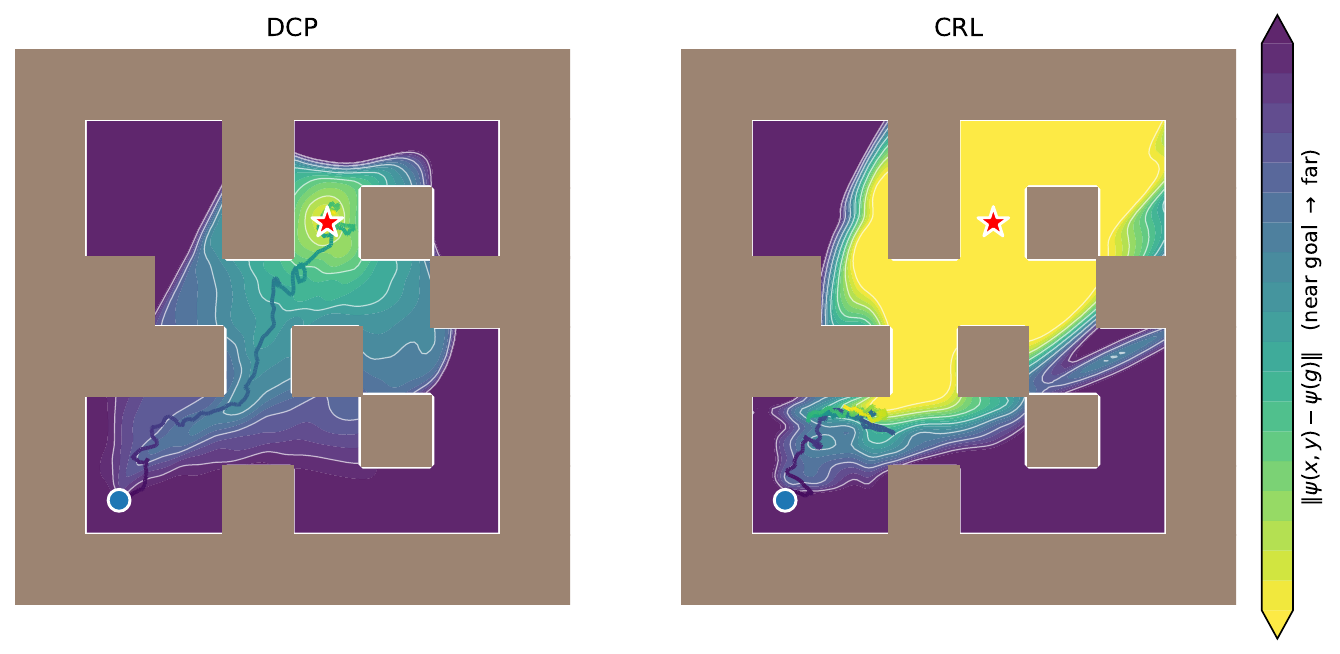}
  \caption{Conditioning landscape on a different seed of AntMaze Big where DCP reaches the goal but CRL stalls before reaching it. Same construction as Figure~\ref{fig:qualitative}(a); per-panel colour clip to the 1\%--99\% percentile of $\psi$-distances along that method's visited trajectory. CRL's basin is more diffuse than on the successful seed: the low-distance region of CRL's $\psi$ extends into parts of the maze the policy does not actually navigate to from the start. Note that this is not a causal claim about CRL's failure  --  CRL's actor consumes only $g$, not $\psi$  --  but rather a parallel observation about the geometry of the representation when the actor never reads it. The aggregate empirical advantage of DCP across nine environments is reported in the main paper figures and appendix tables.}
  \label{fig:psi-landscape-seed1}
\end{figure}

\section{Learning-Curve Checkpoints}

Tables~\ref{tab:appx-eval-episode-success-any}--\ref{tab:appx-eval-episode-success} report five-seed learning-curve checkpoints for every (environment, method) pair plotted in Figure~\ref{fig:main-results}. Table~\ref{tab:appx-eval-episode-success-any} reports \emph{success at least once}, the fraction of episodes that enter the goal region in percent. Table~\ref{tab:appx-eval-episode-success} reports \emph{time near goal}, the average number of timesteps per episode spent inside the goal region (episode length 1000). For each (environment, checkpoint) cell, the highest mean across the three methods is bolded; ties and rows where all methods score zero are not bolded.

Checkpoints are sampled at the nearest available evaluation step and reported as mean $\pm$ 95\% confidence intervals over five seeds. Dashes mark checkpoints past the training budget for that environment.

\begin{table}[h]
\centering
\caption{Learning-curve checkpoints: success at least once (\%), mean $\pm$ 95\% confidence intervals over five seeds. Dashes indicate training ended before that checkpoint.}
\label{tab:appx-eval-episode-success-any}
\scriptsize
\resizebox{\linewidth}{!}{\begin{tabular}{llccccc}
\toprule
Environment & Method & 10M & 25M & 50M & 75M & 100M \\
\midrule
AntMaze Big & DCP & \textbf{18.4 $\pm$ 5.4} & \textbf{23.8 $\pm$ 3.3} & \textbf{29.9 $\pm$ 2.9} & -- & -- \\
 & CRL & 15.1 $\pm$ 3.7 & 17.7 $\pm$ 5.3 & 18.3 $\pm$ 6.5 & -- & -- \\
 & SSGC & 18.3 $\pm$ 1.2 & 22.0 $\pm$ 2.9 & 23.3 $\pm$ 3.0 & -- & -- \\
\midrule
AntMaze Hardest & DCP & \textbf{9.5 $\pm$ 2.5} & \textbf{14.5 $\pm$ 2.1} & \textbf{16.1 $\pm$ 2.3} & -- & -- \\
 & CRL & 5.8 $\pm$ 2.5 & 10.5 $\pm$ 3.8 & 12.2 $\pm$ 5.6 & -- & -- \\
 & SSGC & 7.7 $\pm$ 1.6 & 13.2 $\pm$ 4.8 & 14.2 $\pm$ 8.0 & -- & -- \\
\midrule
Ant U-Maze & DCP & \textbf{40.4 $\pm$ 9.5} & \textbf{52.9 $\pm$ 6.8} & \textbf{55.0 $\pm$ 13.3} & -- & -- \\
 & CRL & 26.8 $\pm$ 4.5 & 40.7 $\pm$ 8.5 & 40.8 $\pm$ 10.2 & -- & -- \\
 & SSGC & 28.4 $\pm$ 9.6 & 43.1 $\pm$ 8.1 & 49.8 $\pm$ 4.2 & -- & -- \\
\midrule
Humanoid U-Maze & DCP & 4.6 $\pm$ 8.4 & 6.7 $\pm$ 11.4 & \textbf{13.4 $\pm$ 9.5} & \textbf{7.7 $\pm$ 9.2} & \textbf{8.4 $\pm$ 9.6} \\
 & CRL & \textbf{9.8 $\pm$ 11.5} & \textbf{7.0 $\pm$ 12.0} & 3.0 $\pm$ 8.5 & 0.0 $\pm$ 0.0 & 0.0 $\pm$ 0.0 \\
 & SSGC & 0.0 $\pm$ 0.0 & 0.0 $\pm$ 0.0 & 0.0 $\pm$ 0.0 & 2.5 $\pm$ 6.9 & 0.0 $\pm$ 0.0 \\
\midrule
AntPush & DCP & 0.0 $\pm$ 0.0 & \textbf{1.1 $\pm$ 2.8} & \textbf{29.1 $\pm$ 19.3} & \textbf{44.8 $\pm$ 8.0} & 42.7 $\pm$ 12.6 \\
 & CRL & 0.0 $\pm$ 0.0 & 0.0 $\pm$ 0.0 & 28.9 $\pm$ 6.3 & 38.2 $\pm$ 8.8 & 45.6 $\pm$ 24.1 \\
 & SSGC & 0.0 $\pm$ 0.0 & 0.0 $\pm$ 0.0 & 28.0 $\pm$ 32.1 & 40.8 $\pm$ 21.1 & \textbf{51.4 $\pm$ 15.5} \\
\midrule
PusherEasy & DCP & \textbf{2.8 $\pm$ 2.2} & \textbf{30.9 $\pm$ 35.0} & \textbf{55.4 $\pm$ 9.6} & -- & -- \\
 & CRL & 1.9 $\pm$ 0.9 & 2.2 $\pm$ 1.8 & 4.4 $\pm$ 4.1 & -- & -- \\
 & SSGC & 1.9 $\pm$ 0.9 & 7.2 $\pm$ 10.8 & 34.1 $\pm$ 38.5 & -- & -- \\
\midrule
PusherHard & DCP & \textbf{2.0 $\pm$ 1.9} & \textbf{33.7 $\pm$ 14.2} & \textbf{37.2 $\pm$ 11.7} & \textbf{45.6 $\pm$ 10.8} & \textbf{47.7 $\pm$ 5.8} \\
 & CRL & 0.3 $\pm$ 0.2 & 0.9 $\pm$ 0.6 & 2.0 $\pm$ 2.5 & 7.3 $\pm$ 12.8 & 28.8 $\pm$ 32.4 \\
 & SSGC & 0.2 $\pm$ 0.3 & 1.2 $\pm$ 1.4 & 8.4 $\pm$ 14.8 & 21.6 $\pm$ 26.4 & 27.8 $\pm$ 30.7 \\
\midrule
PusherHard Far & DCP & \textbf{1.1 $\pm$ 2.8} & \textbf{25.8 $\pm$ 17.3} & \textbf{47.4 $\pm$ 5.7} & -- & -- \\
 & CRL & 0.0 $\pm$ 0.0 & 1.6 $\pm$ 1.9 & 12.5 $\pm$ 24.0 & -- & -- \\
 & SSGC & 0.0 $\pm$ 0.0 & 0.2 $\pm$ 0.3 & 9.0 $\pm$ 15.1 & -- & -- \\
\midrule
AntSoccer & DCP & 2.6 $\pm$ 0.9 & 8.8 $\pm$ 2.9 & 12.9 $\pm$ 4.3 & -- & -- \\
 & CRL & \textbf{5.6 $\pm$ 3.6} & \textbf{14.9 $\pm$ 4.8} & \textbf{22.6 $\pm$ 3.4} & -- & -- \\
 & SSGC & 2.5 $\pm$ 1.3 & 7.0 $\pm$ 2.5 & 14.1 $\pm$ 3.6 & -- & -- \\
\bottomrule
\end{tabular}
}
\end{table}

\begin{table}[h]
\centering
\caption{Learning-curve checkpoints: time near goal (average timesteps per episode within the goal region), mean $\pm$ 95\% confidence intervals over five seeds. Dashes indicate training ended before that checkpoint.}
\label{tab:appx-eval-episode-success}
\scriptsize
\resizebox{\linewidth}{!}{\begin{tabular}{llccccc}
\toprule
Environment & Method & 10M & 25M & 50M & 75M & 100M \\
\midrule
AntMaze Big & DCP & 78.7 $\pm$ 37.1 & \textbf{94.6 $\pm$ 26.2} & \textbf{115.3 $\pm$ 14.6} & -- & -- \\
 & CRL & 75.3 $\pm$ 12.7 & 83.2 $\pm$ 29.4 & 89.2 $\pm$ 28.0 & -- & -- \\
 & SSGC & \textbf{91.9 $\pm$ 19.1} & 94.6 $\pm$ 20.5 & 99.8 $\pm$ 9.0 & -- & -- \\
\midrule
AntMaze Hardest & DCP & 30.3 $\pm$ 12.6 & \textbf{65.1 $\pm$ 12.9} & 63.2 $\pm$ 13.3 & -- & -- \\
 & CRL & 28.4 $\pm$ 15.9 & 39.9 $\pm$ 24.0 & 48.1 $\pm$ 23.7 & -- & -- \\
 & SSGC & \textbf{32.6 $\pm$ 8.1} & 59.0 $\pm$ 21.0 & \textbf{66.0 $\pm$ 41.4} & -- & -- \\
\midrule
Ant U-Maze & DCP & \textbf{195.3 $\pm$ 58.0} & 223.5 $\pm$ 27.9 & 238.1 $\pm$ 76.2 & -- & -- \\
 & CRL & 161.9 $\pm$ 33.1 & \textbf{233.2 $\pm$ 62.8} & \textbf{261.3 $\pm$ 77.2} & -- & -- \\
 & SSGC & 157.7 $\pm$ 65.3 & 230.4 $\pm$ 38.6 & 252.7 $\pm$ 52.1 & -- & -- \\
\midrule
Humanoid U-Maze & DCP & \textbf{1.3 $\pm$ 2.5} & \textbf{4.9 $\pm$ 9.5} & \textbf{3.4 $\pm$ 2.5} & \textbf{1.4 $\pm$ 1.9} & \textbf{3.6 $\pm$ 4.5} \\
 & CRL & 1.0 $\pm$ 1.7 & 1.4 $\pm$ 3.0 & 0.9 $\pm$ 2.5 & 0.0 $\pm$ 0.0 & 0.0 $\pm$ 0.0 \\
 & SSGC & 0.0 $\pm$ 0.0 & 0.0 $\pm$ 0.0 & 0.0 $\pm$ 0.0 & 0.4 $\pm$ 1.1 & 0.0 $\pm$ 0.0 \\
\midrule
AntPush & DCP & 0.0 $\pm$ 0.0 & \textbf{3.8 $\pm$ 10.4} & \textbf{131.8 $\pm$ 113.5} & \textbf{190.9 $\pm$ 71.6} & 181.7 $\pm$ 95.6 \\
 & CRL & 0.0 $\pm$ 0.0 & 0.0 $\pm$ 0.0 & 95.9 $\pm$ 42.6 & 122.7 $\pm$ 30.1 & 143.4 $\pm$ 105.6 \\
 & SSGC & 0.0 $\pm$ 0.0 & 0.0 $\pm$ 0.0 & 103.2 $\pm$ 114.9 & 182.9 $\pm$ 83.9 & \textbf{199.1 $\pm$ 93.3} \\
\midrule
PusherEasy & DCP & 18.2 $\pm$ 9.0 & \textbf{148.2 $\pm$ 179.5} & \textbf{297.8 $\pm$ 90.9} & -- & -- \\
 & CRL & \textbf{18.8 $\pm$ 9.3} & 20.3 $\pm$ 15.1 & 26.2 $\pm$ 8.4 & -- & -- \\
 & SSGC & \textbf{18.8 $\pm$ 9.3} & 40.0 $\pm$ 49.3 & 180.7 $\pm$ 183.4 & -- & -- \\
\midrule
PusherHard & DCP & \textbf{9.3 $\pm$ 4.2} & \textbf{161.5 $\pm$ 84.4} & \textbf{185.7 $\pm$ 45.4} & \textbf{285.3 $\pm$ 103.5} & \textbf{280.9 $\pm$ 28.5} \\
 & CRL & 3.1 $\pm$ 2.2 & 9.4 $\pm$ 5.5 & 13.1 $\pm$ 9.9 & 36.2 $\pm$ 66.8 & 125.7 $\pm$ 136.7 \\
 & SSGC & 2.3 $\pm$ 2.7 & 9.4 $\pm$ 8.1 & 54.0 $\pm$ 99.4 & 133.1 $\pm$ 161.3 & 170.5 $\pm$ 186.0 \\
\midrule
PusherHard Far & DCP & \textbf{2.5 $\pm$ 4.8} & \textbf{124.1 $\pm$ 88.4} & \textbf{260.7 $\pm$ 66.8} & -- & -- \\
 & CRL & 0.0 $\pm$ 0.0 & 10.7 $\pm$ 15.9 & 48.4 $\pm$ 80.4 & -- & -- \\
 & SSGC & 0.0 $\pm$ 0.0 & 1.6 $\pm$ 2.7 & 49.0 $\pm$ 82.3 & -- & -- \\
\midrule
AntSoccer & DCP & 11.2 $\pm$ 8.8 & 33.6 $\pm$ 18.7 & 42.4 $\pm$ 22.1 & -- & -- \\
 & CRL & \textbf{20.8 $\pm$ 10.5} & \textbf{46.3 $\pm$ 13.8} & \textbf{76.4 $\pm$ 7.7} & -- & -- \\
 & SSGC & 9.4 $\pm$ 6.9 & 29.8 $\pm$ 12.6 & 48.2 $\pm$ 31.9 & -- & -- \\
\bottomrule
\end{tabular}
}
\end{table}

\end{document}